\documentclass[11pt,oneside,reqno]{article}
\pdfoutput=1
\usepackage[top=1in, bottom=1in, left=1in, right=1in]{geometry}
\usepackage{amsmath,amsthm,amssymb}
\usepackage{bbm,tikz,tabularx}
\usepackage{mathrsfs,mathtools}
\usepackage{enumitem,pifont}
\usepackage{color}
\newcommand{\cmark}{\ding{51}}%
\newcommand{\xmark}{\ding{55}}%

\usepackage{wasysym,makecell}
\usepackage[authoryear,round]{natbib}
\newcommand{\stkout}[1]{\ifmmode\text{\sout{\ensuremath{#1}}}\else\sout{#1}\fi}
\newcommand{\ignore}[1]{}

\usepackage{url}

\usepackage{etoolbox} 

\makeatletter

\newif\if@noindent

\def\@aftertheorem{%
  \@noindenttrue
  \everypar{%
    \if@noindent
      \@noindentfalse\clubpenalty\@M\setbox\z@\lastbox
    \else
      \clubpenalty \@clubpenalty\everypar{}
    \fi}}

\let\aftertheorem\@aftertheorem

\makeatother

\theoremstyle{plain}
\newtheorem{theorem}{Theorem}[section]

\newtheorem{lemma}[theorem]{Lemma}
\newtheorem{corollary}[theorem]{Corollary}

\theoremstyle{definition}
\newtheorem{remark}[theorem]{Remark}

\AfterEndEnvironment{theorem}{\aftertheorem}
\AfterEndEnvironment{proposition}{\aftertheorem}
\AfterEndEnvironment{lemma}{\aftertheorem}
\AfterEndEnvironment{corollary}{\aftertheorem}
\AfterEndEnvironment{remark}{\aftertheorem}
\AfterEndEnvironment{definition}{\aftertheorem}
\AfterEndEnvironment{example}{\aftertheorem}

\usepackage[titles]{tocloft}
\setlength\cftparskip{-2.2pt}

\usepackage{xcolor} 

\newcommand*{\colorboxed}{}
\def\colorboxed#1#{%
  \colorboxedAux{#1}%
}
\newcommand*{\colorboxedAux}[3]{%
  \begingroup
    \colorlet{cb@saved}{.}%
    \color#1{#2}%
    \boxed{%
      \color{cb@saved}%
      #3%
    }%
  \endgroup
}


\usepackage[linktocpage,colorlinks,linkcolor=purple,anchorcolor=blue,citecolor=purple,urlcolor=black,pagebackref]{hyperref}

\renewcommand*{\backref}[1]{\ifx#1\relax \else Page #1 \fi}
\renewcommand*{\backrefalt}[4]{%
  \ifcase #1 \footnotesize{(Not cited.)}%
  \or        \footnotesize{(Cited on page~#2.)}%
  \else      \footnotesize{(Cited on pages~#2.)}%
  \fi
}


%
%
%
%
%


\def\be#1{\begin{equation*}#1\end{equation*}}
\def\ben#1{\begin{equation}#1\end{equation}}

\def\besn#1{\begin{equation}\begin{split}#1\end{split}\end{equation}}

\def\ba#1{\begin{align*}#1\end{align*}}
\def\ban#1{\begin{align}#1\end{align}}

\def\given{\typeout{Command 'given' should only be used within bracket command}}
\newcounter{@bracketlevel}
\def\@bracketfactory#1#2#3#4#5#6{
\expandafter\def\csname#1\endcsname##1{%
\addtocounter{@bracketlevel}{1}%
\global\expandafter\let\csname @middummy\alph{@bracketlevel}\endcsname\given%
\global\def\given{\mskip#5\csname#4\endcsname\vert\mskip#6}\csname#4l\endcsname#2##1\csname#4r\endcsname#3%
\global\expandafter\let\expandafter\given\csname @middummy\alph{@bracketlevel}\endcsname
\addtocounter{@bracketlevel}{-1}}%
}
\def\bracketfactory#1#2#3{%
\@bracketfactory{#1}{#2}{#3}{relax}{1mu plus 0.25mu minus 0.25mu}{0.6mu plus 0.15mu minus 0.15mu}
\@bracketfactory{b#1}{#2}{#3}{big}{1mu plus 0.25mu minus 0.25mu}{0.6mu plus 0.15mu minus 0.15mu}
\@bracketfactory{bb#1}{#2}{#3}{Big}{2.4mu plus 0.8mu minus 0.8mu}{1.8mu plus 0.6mu minus 0.6mu}
\@bracketfactory{bbb#1}{#2}{#3}{bigg}{3.2mu plus 1mu minus 1mu}{2.4mu plus 0.75mu minus 0.75mu}
\@bracketfactory{bbbb#1}{#2}{#3}{Bigg}{4mu plus 1mu minus 1mu}{3mu plus 0.75mu minus 0.75mu}
}
\bracketfactory{clc}{\lbrace}{\rbrace}
\bracketfactory{clr}{(}{)}
\bracketfactory{cls}{[}{]}
\bracketfactory{abs}{\lvert}{\rvert}
\bracketfactory{norm}{\Vert}{\Vert}
\bracketfactory{floor}{\lfloor}{\rfloor}
\bracketfactory{ceil}{\lceil}{\rceil}
\bracketfactory{angle}{\langle}{\rangle}

\newcommand{\IE}{\mathop{\mathbb{E}}}

\newcount\minute
\newcount\hour
\newcount\hourMins
\def\now{%
\minute=\time%
\hour=\time \divide \hour by 60%
\hourMins=\hour \multiply\hourMins by 60%
\advance\minute by -\hourMins%
\zeroPadTwo{\the\hour}:\zeroPadTwo{\the\minute}%
}
\def\zeroPadTwo#1{\ifnum #1<10 0\fi#1}

\numberwithin{equation}{section}
\allowdisplaybreaks[4]

\makeatletter

\renewcommand\section{\@startsection {section}{1}{\z@}%
{-3.5ex \@plus -1ex \@minus -.2ex}%
{1.3ex \@plus.2ex}%
{\center\small\sc\mathversion{bold}\MakeUppercase}}

\def\subsection#1{\@startsection {subsection}{2}{0pt}%
{-3.5ex \@plus -1ex \@minus -.2ex}%
{1ex \@plus.2ex}%
{\bf\mathversion{bold}}{#1}}

\def\subsubsection#1{\@startsection{subsubsection}{3}{0pt}%
{\medskipamount}%
{-10pt}%
{\normalsize\itshape}{\kern-2.2ex. #1.}}

\makeatother

\def\blfootnote{\xdef\@thefnmark{}\@footnotetext}

\makeatother

\renewcommand{\cite}{\citet}

\def\^#1{\ifmmode {\mathaccent"705E #1} \else {\accent94 #1} \fi}
\def\~#1{\ifmmode {\mathaccent"707E #1} \else {\accent"7E #1} \fi}

\edef\-#1{\noexpand\ifmmode {\noexpand\bar{#1}} \noexpand\else \-#1\noexpand\fi}
\def\>#1{\vec{#1}}
\def\.#1{\dot{#1}}

\def\wt#1{\widetilde{#1}}
\def\atop{\@@atop}

\renewcommand{\leq}{\leqslant}
\renewcommand{\geq}{\geqslant}
\renewcommand{\phi}{\varphi}
\newcommand{\eps}{\varepsilon}

\newcommand{\eq}{\eqref}

\newcommand{\done}{\mathsf{d}_1}
\newcommand{\dthree}{\mathsf{d}_3}

\newcommand{\bigo}{\mathrm{O}}

\newcommand{\Var}{\mathop{\mathrm{Var}}\nolimits}
\newcommand{\law}{\mathscr{L}}

\def\sp#1{^{(#1)}}

\newcommand{\Cov}{\mathrm{Cov}}
\def\eqd{\stackrel{d}{=}}

\newcommand{\Lip}{\mathrm{Lip}}

\newcommand{\gendim}{\mathfrak{D}}

\makeatletter

\let\originalTilde\~

\def\~#1{\ifmmode {\mathaccent"707E #1} \else {\accent"7E #1} \fi}

\newcounter{ctr}
\loop\stepcounter{ctr}\edef\X{\@Alph\c@ctr}%
  \expandafter\edef\csname s\X\endcsname{\noexpand\mathscr{\X}}
  \expandafter\edef\csname c\X\endcsname{\noexpand\mathcal{\X}}
  \expandafter\edef\csname b\X\endcsname{\noexpand\boldsymbol{\X}}
  \expandafter\edef\csname I\X\endcsname{\noexpand\mathbbm{\X}}  
  \expandafter\edef\csname r\X\endcsname{\noexpand\mathrm{\X}}
\ifnum\thectr<26\repeat

\let\~\originalTilde

\makeatother



\begin{document}

\title{\sc\bf\large\MakeUppercase{
Finite-Dimensional Gaussian Approximation for Deep Neural Networks: Universality in Random Weights }}

\author{
  \begin{tabular}{c@{\hskip 0.5in}c}
 Krishnakumar Balasubramanian &  Nathan Ross\\
\it University of California, Davis & \it University of Melbourne\\
\texttt{kbala@ucdavis.edu} &\texttt{nathan.ross@unimelb.edu.au} \\
\end{tabular}
}
\date{\today}

\maketitle
\begin{abstract}
We study the Finite-Dimensional Distributions (FDDs) of deep neural networks with randomly initialized weights that have finite-order moments. Specifically, we establish Gaussian approximation bounds in the Wasserstein-$1$ norm between the FDDs and their Gaussian limit assuming a Lipschitz activation function and allowing the layer widths to grow to infinity at arbitrary relative rates. In the special case where all widths are proportional to a common scale parameter $n$ and there are $L-1$ hidden layers, we obtain convergence rates of order $n^{-({1}/{6})^{L-1} + \epsilon}$, for any $\epsilon > 0$.



\end{abstract}


\section{Introduction}

\begin{table}
\hspace{-0.26in}    
    \begin{tabular}{|c|c|c|c|c|c|}\hline 
        Source & \makecell{Non-Gaussian\\Weights}   & Activation & Statistic  &  Metric   &  Rate \\ \hline 
         \cite{basteri2024quantitative} & \xmark  & Lip  & FDD & \makecell{$W_2$ \\(w.r.t. \\ 2-norm) } & $n^{-\tfrac{1}{2}}$\\ \hline
         \cite{apollonio2025normal} & \xmark & non-Lip  &  FDD & \makecell{ $W_1$ \\(w.r.t. \\ 2-norm) \\ \& \\ Convex \\ distance}  & $n^{-\tfrac{1}{2}}$ \\ \hline
        \cite{trevisan2023wide} &  \xmark  & Lip  & FDD & \makecell{$W_p$ \\(w.r.t. \\ $p$-norm)  \\ for $p\in [1,\infty)$} & \makecell{$n^{-\tfrac{1}{2}}$ \\ \& \\ $n^{-1}$ \\ (full-rank \\ limiting \\covariance)} \\ \hline
         \cite{Balasubramanian2024}& \cmark  &  Lip &  IDD & \makecell{$W_1$ \\ (w.r.t. \\ sup-norm) }& NA \\ \hline
         \cite{favaro2023quantitative} & \xmark  & \makecell{Poly\\bounded \\ \& Inf \\ diff}   & \makecell{FDD,\\ IDD} & \makecell{Convex\\ distance\\ (FDD) \\ \&\\ \textcolor{black}{$W_2$} \\ \textcolor{black}{(w.r.t)}\\ \textcolor{black}{Sobolev norms)} \\ \textcolor{black}{(IDD)}}  &\makecell{ $n^{-\tfrac{1}{2}}$ \\ (FDD) \\ \&\\ $n^{-\tfrac{1}{8}}$\\ (IDD) \\ (both\\ full-rank \\ limiting \\covariance)}  \\ \hline
         \cite{celli2025entropic}         & \xmark    &\makecell{Poly \\ bounded} & FDD & \makecell{TV \\ \& \\ $W_2$\\(w.r.t. \\ 2-norm)} & \makecell{$n^{-1}$\\(full-rank \\ limiting \\covariance)}\\ \hline
         Theorem~\ref{thm:fddapp}  & \cmark    & Lip  &  FDD& \makecell{$W_1$\\(w.r.t. \\ 2-norm) } & \makecell{$n^{-\tfrac{1}{6^{L-1}}+\epsilon}$ \\ $(\forall \epsilon >0)$} \\ \hline
    \end{tabular}
    \caption{A summary of \emph{quantitative} Gaussian approximation rates for deep neural networks in the proportional width regime (i.e., $n_i \propto n$, $i=1, \ldots, L-1$). FDD and IDD stands for finite-dimensional and infinite-dimensional distributions, respectively. Lip stands for Lipschitz activation functions. NA stands for Not Applicable. We highlight that Theorem~\ref{thm:fddapp} does not assume any full-rank conditions on the limiting covariance.}
    \label{tab:my_label}
\end{table}

An $L$-layer neural network $F\sp{L}$ is a parametric function mapping inputs from a subset $\mathcal{M} \subset \mathbb{R}^{n_0}$—for example, the set of vectorized images containing either cats or dogs—to outputs in $\mathbb{R}^{n_L}$, such as a scalar indicating the likelihood that the image depicts a cat. The input and output dimensions, $n_0$ and $n_L$, are assumed to be fixed and are determined by the data pre-processing pipeline and the desired task output, respectively. The parameters of $F\sp{L}$ are \emph{weight} matrices $(W\sp{\ell})_{\ell=0}^{L-1}$, where $W\sp{\ell}\in \IR^{n_{\ell+1} \times n_\ell}$ and $n_1,\ldots, n_{L-1}$  are referred to as the \emph{widths} of the \emph{hidden layers}. Given these weights and an \emph{activation function} $\sigma:\IR\to\IR$, the deep neural network $F\sp{L}$ is defined recursively through intermediate functions $F\sp{\ell}:\cM\to\IR^{n_\ell}$, for $\ell=1,\ldots, L-1$ by 
\besn{\label{eq:fells}
F\sp{1}(x)& = W\sp{0} x \\ 
F\sp{\ell}(x) &= W\sp{\ell-1} \sigma\bclr{F\sp{\ell-1}(x)}, 
\,\,\, \ell=2,\ldots, L,
}
where $\sigma$ is applied coordinate-wise. Typically, each layer also includes Gaussian \emph{bias} parameters; however, setting them to zero does not affect our results and is therefore omitted—see Remark~\ref{rem:bgone}. Given \emph{training data}, the weights of the neural network are chosen by minimizing a loss function using a gradient descent algorithm with a random initialization, where  the weights $W_{i,j}\sp{\ell}$ are chosen to be 
centered, identically distributed, and independent across $i,j,\ell$.   In the \emph{wide} regime where $n_i$ is large for all $i=1,\ldots,L-1$, it is known 
that the neural network at such an initialization is close to a Gaussian process.
This was first observed by
\citet{neal1996priors}, who showed the convergence of a single hidden-layer neural network at initialization to a Gaussian. The Gaussian limit in the single layer case is fairly straightforward because $F\sp{2}$ can be written as a sum of independent functions, and so classical techniques for sums of independent random variables apply. Subsequent works such as \citet{lee2018deep} and \citet{Matthews2018} provided heuristic arguments and empirical evidence suggesting that deep neural networks with multiple layers similarly exhibit Gaussian behavior as their widths grow. More recently, \citet{Hanin2023} rigorously established asymptotic convergence results for deep networks, further supporting the Gaussian behavior in the infinite-width regime.
Our main result establishes Gaussian approximation bounds, in the Wasserstein-1 distance, between the finite-dimensional distributions (FDDs) of wide neural networks, and these Gaussian process limits, under  general independent weight distributions satisfying mild moment conditions, and assuming a Lipschitz activation function.

Apart from their intrinsic probabilistic interest, there are several compelling reasons to study deep neural networks (DNNs) with non-Gaussian weights. In practice, weights are typically initialized randomly—often with independent and identically-distributed entries—and subsequently optimized using algorithms like stochastic gradient descent (SGD) to minimize a loss function over a given dataset. While Gaussian initializations are widely used, other schemes such as uniform initialization~\citep{glorot2010understanding,golikov2022nongaussian} and Bernoulli distributions, particularly in quantized networks~\citep{li2017training}, are also common. Moreover, in transfer learning settings where pre-trained models are fine-tuned on new tasks, the initial weight distribution is often far from Gaussian. This motivates the study of DNN behavior under more general random weight initializations. Furthermore, \citet{Jacot2018} showed that in the so-called lazy training regime, infinitely wide DNNs exhibit deterministic training dynamics governed by the Neural Tangent Kernel (NTK), and their fluctuations converge to Gaussian processes—emphasizing how the choice of initialization fundamentally shapes both the limiting kernel and the resulting generalization behavior~\citep{carvalho2023wide}. Finally, several works have explored DNNs with weights drawn from heavy-tailed or stable distributions, which may exhibit infinite variance~\citep{der2005beyond, favaro2023deep, bordino2023infinitely, soto2024wide, jung2023stable, loria2023posterior}. These studies are motivated by empirical advantages observed in such models and have established that, in the infinite-width limit, network outputs may converge to stable processes rather than Gaussian processes.


Turning to quantitative Gaussian approximation bounds, results in this literature can be sorted over a number of (categorical) dimensions. One is whether they apply to \emph{shallow} NNs, which have only one hidden layer ($L=2$), or to \emph{deep} NNs, with at least two hidden layers ($L>2$). Another is whether the distributional convergence results apply to univariate distributions $F\sp{L}(x)$ for $x\in \cM$, finite dimensional distributions (FDDs) $(F\sp{L}(x_a))_{a=1}^s$ for a sample $(x_a)_{a=1}^s$ of $\cM$, or at the \emph{functional} level, considering convergence of the \emph{process} $(F\sp{L}(x))_{x\in \cM}$, viewed as an element of some some functional space. A third dimension is the metric used to quantify closeness, and a fourth is the assumptions on the activation $\sigma$.  Finally, and most relevant to this work, is assumptions on the distribution of the weights. In particular, it is almost uniformly assumed across the literature that the weights are Gaussian, notable exceptions being~~\cite{Hanin2023} and \cite{Balasubramanian2024}.

A recent and very thorough account of this literature is given in 
\citet[Section~1.7]{celli2025entropic}, but we give a quick review to contextualize our results. A summary of quantitative bounds for deep neural networks is provided in Table~\ref{tab:my_label}. For shallow NNs, some functional results include \cite{Eldan2021}, \cite{Klukowski2022}, and \cite{Cammarota2024}, all with fairly restrictive assumptions on the activations or weight distributions, and typically in weak $L^p$-type metrics at the functional level. For deep NNs, rates of convergence for FDDs assuming Gaussian weights have been given in  \cite{apollonio2025normal}, \cite{basteri2024quantitative}, \cite{bordino2024non}, \cite{celli2025entropic}, \cite{favaro2023quantitative}, and \cite{trevisan2023wide}, under a variety of activations and metrics. At the functional level, assuming Gaussian weights,  \cite{favaro2023quantitative} derive rates of convergence with respect to Sobolev topologies in the Wasserstein-$2$ norm, and, assuming the activation is infinitely smooth, with respect to the supremum topology. Assuming \emph{general weights}, process   convergence with respect to the supremum topology was shown by \cite{Hanin2023}, and rates of convergence in the same topology in the Wasserstein-$1$ norm  are given in \cite{Balasubramanian2024}. 

\subsection{Main result}

In this work, we provide a bound on the Wasserstein-$1$ norm between the FDDs of $F\sp{L}$ and their Gaussian limit assuming a general weight distribution and a Lipschitz activation function. The bound can be made explicit  in all parameters, is independent of spectral properties of the limiting covariance, and  tends to zero as the widths tend to infinity in any way, for fixed $L$. This is the first such bound appearing in the literature; see also Table~\ref{tab:my_label}.

In order to state our result and discuss it further, we need to introduce the limiting Gaussian distribution. First, 
the proper limiting scaling assumes that there are constants $c_w\sp\ell>0$ for $\ell=0,\ldots, L-1$ such that 
\ben{\label{eq:wellvar}
\Var(W_{ij}\sp{\ell})=\frac{c_w\sp{\ell}}{n_{\ell}}.
}
Given  
this, \cite{Hanin2023} showed that for an absolutely continuous activation with polynomially-bounded derivative and centered independent and identically-distributed weights $W_{ij}\sp\ell$ with all absolute moments $p\geq 1$ bounded uniformly of order $n_\ell^{-p/2}$, the neural network at initialization converges to a
 Gaussian field  which is defined recursively via intermediate fields $G\sp{\ell}$, $\ell=1,\ldots, L-1$ as follows. We first set $G\sp{1}=F\sp{1}$, and, given we have defined the distribution of $G\sp{\ell}$, we let $G\sp{\ell+1}$ be a centered Gaussian random field with covariance 
\ben{\label{eq:gpcov}
C_{ij}\sp{\ell+1}(x,y) = \delta_{ij}\bbbclr{c_w\sp{\ell}\IE\bbcls{\sigma\bclr{G\sp{\ell}_1(x)} \sigma\bclr{G\sp{\ell}_1(y)}}},
}
where $\delta_{ij}$ denotes the Kronecker delta, and $G_1\sp\ell$ is the first component of $G\sp\ell$.  
Note that the covariance function of $F\sp{L}$, calculated from~\eqref{eq:fells} and~\eqref{eq:wellvar}, follows the same recursion, but with $G$ replaced by $F$, so that the coordinates of $F\sp{\ell}$ are uncorrelated, though not independent. An important component of most proofs in the deep regime is that if $F\sp{\ell}$ is approximately Gaussian, then the coordinates should be ``close'' to independent.

Our goal is 
to bound the Wasserstein-$1$ distance
of the finite dimensional distributions between $F\sp{L}$ and those of its limit $G\sp{L}$.
To this end, for a subset of points in the domain  $\chi:=(x_1, x_2, \ldots, x_s)\in \clr{\cM}^s$
and a field $F:\cM\to\IR^d$, denote $F(\chi):= \bclr{F(x_1),\ldots, F(x_s)}$, which  we encode as a matrix in $\IR^{d \times s}$ (due to the different roles of the sample and the vector coordinates), but we equip with the usual Euclidean $2$-norm $\norm{\cdot}_2$.
We can then
 write 
\be{
F\sp{\ell+1}(\chi)=W\sp\ell \sigma\bclr{F\sp{\ell}(\chi)},
}
where $\sigma$ is applied component-wise.
We want to bound the Wasserstein distance
\be{
\done \bclr{F\sp{L}(\chi), G\sp{L}(\chi)}:= \sup_{\zeta \in \cW_{n_L\times s}} \babs{\IE\bcls{\zeta\clr{F\sp{L}(\chi)}}-\IE\bcls{\zeta \clr{G\sp{L}(\chi)}}},
}
where, for any $d\geq 1$, we define
\be{
 \cW_{d\times s} \coloneqq   
 \left\{ \zeta:\IR^{d\times s}  \rightarrow \mathbb{R} : \sup_{f \not = g}\frac{\abs{\zeta(f)-\zeta(g)}}{\norm{f - g}_2}\leq 1 \right\},
}
is the set of $1$-Lipschitz functions on $\IR^{d\times s}$ with respect to $\norm{\cdot}_2$. Note the dimensionality in $\done$ is not explicit, but this will not cause confusion.

Our main result is the following.
\begin{theorem}\label{thm:fddapp}
Let $F\sp{L}$ be the DNN defined at~\eqref{eq:fells} with  centered weights $W_{i,j}\sp\ell$ satisfying~\eqref{eq:wellvar} which are independent across $i,j,\ell$
with identically distributed rows:
$(W_{i,k}\sp\ell)_{k=1}^{n_\ell}\stackrel{d}{=}(W_{j,k}\sp\ell)_{k=1}^{n_\ell}$, and with a  Lipschitz activation~$\sigma$.
Assume further that there is  $p>2$ and constants  $c_{2p}\sp\ell\geq 1$ for  $\ell=0,\ldots, L-2$  such that $\IE\bcls{(W_{ij}\sp\ell)^{2p}}\leq c_{2p}\sp{\ell}/n_\ell^p$ and a constant  $c_3\sp{L-1}$ such that  $\IE\bcls{\abs{W_{ij}\sp{L-1}}^3} \leq c_3\sp{L-1} n_{L-1}^{-3/2}$. Then for $\chi= (x_1,\ldots,x_s)\in \cM^s$, there is a constant $\mathtt{C}$ depending on $\sigma$, $p$, $L$, $\chi$, $c_{2p}\sp\ell$, $\ell=0,\ldots,L-2$ and $c_3\sp{L-1}$ such that 
\be{
\done\bclr{F\sp{L}(\chi), G\sp{L}(\chi)}
    \leq \mathtt{C} \, n_{L}^{1/3}  \sum_{m=1}^{L-1} n_m^{-\frac{1}{6}\left(\frac{p-2}{3(2p-1)}\right)^{L-m-1}},
}
where $G\sp{L}$ is the Gaussian process defined by the covariance recursion~\eqref{eq:gpcov}.
\end{theorem}
\begin{remark}
    The constant $\mathtt{C}$ depends on $\sigma$ through its Lipschitz constant and value at zero, and it depends on $\chi$ through $\{\norm{x_a}_1\}_{a\in\chi}$. It is essentially  polynomial (with degree growing with depth) in the terms given at~\eqref{eq:b2p},~\eqref{eq:c2p}, and~\eqref{eq:d2p}, and could in principle be extracted from the proof. Regardless of tail or boundedness assumptions on the weights, those terms grow with $p$ due to the first  factor of $c^{L-1}$ in~\eqref{eq:b2p}, which is  a combinatorial constant. 
\end{remark}

\begin{remark}\label{rem:bestrate}
If such $c_{2p}\sp{\ell}$ exist for arbitrarily large $p$, then the bound can be made of order
\be{
n_{L}^{1/3}  \sum_{m=1}^{L-1} \frac{1}{n_m^{\left(1/6\right)^{L-m}-\delta}},
}
for any $\delta>0$.  
Considering rates of convergence for the central limit theorem, a presumably optimal rate (in the layer widths) would be of order 
\ben{\label{eq:bestratep}
n_L^{\phi}  \sum_{m=1}^{L-1} n_m^{-\frac{1}{2}},
}
for some $\phi>0$. 
Our worse power stems from  initially working in an integral probability metric~$\dthree$ defined at~\eqref{eq:d3def}, which is weaker than $\done$, but  can be easily used with Stein's method and does not require any assumptions on the covariance matrix of the limiting Gaussian vector $G\sp{L}(\chi)$ (see the forthcoming Remark~\ref{rem:cov}), without which the rate may in truth be worse than the classical setting. We then use a smoothing argument to get back to $\done$ which introduces a factor of $1/3$ to the power. (There is a further factor of $1/2$ introduced for reasons too technical to explain in a remark, but it stems from~\eqref{eq:d3cg2bd3}.) 
The argument is inductive in the layers, and the smoothing is performed at each step, which leads to the power of $(L-m)$ in the factor of $1/6$ in the power. Similar to \citet[Section~6.2]{Balasubramanian2024}, the rate could be somewhat improved by assuming the activation function has three bounded derivatives. The form of the rate would still have the power of $(L-m)$ in the exponent, but now with a base of $(1/2)$   (stemming from~\eqref{eq:d3cg2bd3}) rather than $(1/6)$.
\end{remark}

\begin{remark}\label{rem:cov}
It is worth noting that our result requires no assumption on the limiting covariance of $G\sp{L-1}(\chi)$. As previously mentioned, this is because we first work in the weaker $\dthree$ metric; see Theorem~\ref{thm:dthreestn}. 
The potential degeneracy of the covariance figures prominently in Gaussian approximation results for DNNs, and many rates depend on the eigenvalues of the limiting covariance, including demanding they are all strictly positive (see Table~\ref{tab:my_label}). This is because when the weights are Gaussian, $F\sp{L}$ is conditionally Gaussian given $F\sp{L-1}$, and comparing a conditional Gaussian to $G\sp{L}$ requires comparing the (conditional) covariances; see \citet[(2.7),(2.1), and Lemma~3.4]{basteri2024quantitative} and \citet[Propositions~5.11 and~5.12]{favaro2023quantitative}; which involves the spectrum of covariance matrices. Applying such results may require additional hypotheses on the NN, and $\sigma$ especially. 
In contrast, our constants do not involve eigenvalues of the covariance. 
Revisiting the discussion of the best achievable rates of Remark~\ref{rem:bestrate}, since 
DNNs do not fit neatly into the setting of the classical central limit theorem, the rate~\eqref{eq:bestratep} may not be achievable without additional assumptions.
\end{remark}

\begin{remark}\label{rem:bgone}
    As is typical with approximation results for DNNs in transportation metrics, the Gaussian biases that usually appear in the definition~\eqref{eq:fells} of the NN do not affect the error in our Theorem~\ref{thm:fddapp}.
    See \citet[Proof of Theorem~1.2]{Balasubramanian2024} for the argument in the more challenging infinite-dimensional setting. As discussed in \citet[Section~1.4]{Hanin2023}, the biases must be Gaussian to have a Gaussian limit.
\end{remark}

\begin{remark}
Recent work by~\cite{lee2023deep} and \cite{apollonio2025simulating} shows that the standard assumption of independent weights can be relaxed by introducing \emph{per-node variance mixtures}, allowing the outgoing weights from a neuron to share dependence through a latent variance parameter while still admitting a meaningful infinite-width limit. In particular, when the latent mixing distribution is non-trivial, the resulting limit is no longer a single Gaussian process but a \emph{mixture of Gaussian processes}, exhibiting potentially heavy-tailed behavior. In our setting, one could in principle adopt a similar mixture-based relaxation, though this would substantially complicate the analytical tractability and alter the nature of the limiting kernel object, and consequently, the associated convergence rates. A detailed investigation is left as future work.
\end{remark}

The proof of the theorem starts from the fact (which is commonly used in this setting) that  conditional on $F\sp{L-1}$, $F\sp{L}$ 
is a sum of independent and centered random elements, and so should be unconditionally close to a (mixed) Gaussian with a random covariance (and is exactly a Gaussian mixture in the case the weight matrices are Gaussian). A bound then follows by quantifying how close the Gaussian mixture is to a Gaussian having deterministic covariance, which is the mean of the mixture.
Since the mixture is in terms of $F\sp{L-1}$, there is necessarily an induction step that leverages that 
certain statistics of $F\sp{L-1}$ behave like their $G\sp{L-1}$ analogs, which are easy to handle because the coordinates of $G\sp{L-1}$ are independent. The statistics we need to control are captured  by the relatively simple
Corollary~\ref{cor:gauswt2gaus} below, which 
has an especially nice form because we first work in $\dthree$.

\section{The proof}
Our approach to bounding the Wasserstein distance
is to apply  Stein's method to bound  the weaker integral probability metric
\ben{\label{eq:d3def}
\dthree \bclr{F\sp{L}(\chi), G\sp{L}(\chi)}:= \sup_{\zeta \in \cF_{n_L \times s}} \babs{\IE\bcls{\zeta\clr{F\sp{L}(\chi)}}-\IE\bcls{\zeta \clr{G\sp{L}(\chi)}}},
}
where, for any $d\geq 1$, we define $\cF_{d\times s}$ to be the set of test functions $\zeta: \IR^{d\times s}  \rightarrow \mathbb{R}$ such that for any $\alpha, \beta,\gamma \in \Gamma_{d\times s}:= \{1,\ldots, d\}\times \{1,\ldots, s\}$ and $f\in \IR^{d\times s}$, we have
\be{
\abs{\partial_{\alpha} \eta(f) }\leq 1,
\ \ \abs{\partial_{\alpha,\beta}  \eta(f) }\leq 1, \, \text{ and } \,
\abs{\partial_{\alpha,\beta,\gamma} \eta(f)} \leq 1, 
}
where $\partial_{(\cdot)}$ denotes the partial derivative with respect to the coordinate(s) given by $(\cdot)$.
We then move back to the Wasserstein distance using standard smoothing arguments; see Lemma~\ref{lem:smooth}.

For $F\sp\ell(\chi) = (F_1\sp\ell(\chi),\ldots, F_{n_\ell}\sp\ell(\chi)) \in\IR^{n_\ell\times s}$ and any $1\leq d \leq n_\ell$, we 
write $$F_{[d]}\sp\ell(\chi):=(F\sp\ell_1(\chi),\ldots,F\sp\ell_d(\chi))\in\IR^{d\times s}.$$
Our first step is to look at the distance between an  intermediate field. 
Let $(\wt W\sp\ell)_{\ell=0}^{L-1}$
be independent Gaussian weight matrices having the same variances as $( W\sp\ell)_{\ell=0}^{L-1}$ given at~\eqref{eq:wellvar}, with the two sequences of weight matrices independent so that, e.g., $\wt W\sp{\ell-1}$ is independent of $F\sp{\ell-1}$.
Let $\wt F\sp\ell= \wt W\sp{\ell-1} \sigma( F\sp{\ell-1})$. 
Then the triangle inequality implies for an $d=1,\ldots, n_L$, we have
\ben{
\dthree(F_{[d]}\sp{L}(\chi), G_{[d]}\sp{L}(\chi))  \leq \dthree(F_{[d]}\sp{L}(\chi), \wt F_{[d]}\sp{L}(\chi))
 +\dthree(\wt F_{[d]}\sp{L}(\chi),G_{[d]}\sp{L}(\chi)). \label{eq:unit1} 
}

\subsection{Error for general to Gaussian weights}
For the first term of~\eq{eq:unit1}, we are comparing $ W\sp{L-1} \sigma( F\sp{L-1}(\chi))$ and $\wt W\sp{L-1} \sigma( F\sp{L-1}(\chi))$, and, in anticipation of conditioning on $F\sp{L-1}$, we state and prove the following lemma bounding the $\dthree$ distance in this setting when thinking of $\sigma(F\sp{L-1})$ as fixed.

\begin{lemma}\label{lem:fddapprox}
Let  $h\in \IR^{m\times s}$  and $W$ be a
$d\times m$ random matrix which has centered independent entries having the same variance 
$\Var(W_{ij})=:c_2/m$, also satisfying $\IE\bcls{\abs{W_{ij}}^3}\leq c_3/m^{3/2}$.
Define $F\in \IR^{d\times s}$ by
\be{
F = W h.
}
Analogously, let $\wt W$ be a
$d\times m$ random matrix that has independent centered Gaussian entries with the same variances as $W_{ij}$,
and define $\wt F\in \IR^{d \times s}$ by 
\be{
\wt F = \wt W h.
} 
Then 
\be{
\dthree(F,\wt F)\leq 
 \bbbclr{ \frac{  2 c_2^{3/2} + c_3 }{6}}    \frac{d}{m^{3/2}}\sum_{k=1}^m \bbbclr{\sum_{a=1}^s \abs{h_{k,a}}}^3.
}
\end{lemma}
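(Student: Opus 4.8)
The plan is to prove Lemma~\ref{lem:fddapprox} by a direct application of Stein's method for multivariate normal approximation, exploiting the fact that $F = Wh$ is, conditionally on nothing (since $h$ is fixed here), a sum of $m$ independent centered random matrices: writing $W_{\bullet k}$ for the $k$-th column of $W$, we have $F = \sum_{k=1}^m W_{\bullet k} h_{k,\bullet}$, where each summand $W_{\bullet k} h_{k,\bullet}\in\IR^{d\times s}$ is a rank-one random matrix with independent rows. The Gaussian target $\wt F$ has exactly the same covariance structure as $F$ (both are determined by the common variances $\Var(W_{ij})=c_2/m$ and the fixed matrix $h$), so $\dthree(F,\wt F)$ is precisely the kind of quantity controlled by the third-order smooth-test-function bound in Stein's method for sums of independent random vectors. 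Vectorizing $\IR^{d\times s}\cong\IR^{ds}$, I would invoke the standard Stein bound: for test functions with third derivatives bounded by $1$, the error is controlled by a sum over the independent summands of third absolute moments of their (vectorized) entries, with the combinatorial $\tfrac16$ coming from the Taylor remainder.

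The key steps, in order: (1) Set up the vectorized representation $F=\sum_{k=1}^m Y_k$ with $Y_k = W_{\bullet k}\otimes h_{k,\bullet}$ (a $d\times s$ matrix whose $(i,a)$ entry is $W_{ik}h_{k,a}$), note the $Y_k$ are independent and centered, and that $\sum_k \Cov(Y_k)$ equals the covariance of $\wt F$. (2) Apply the classical Stein's-method bound for normal approximation of sums of independent random vectors in the metric generated by $\cF_{d\times s}$ — this is the step where I would cite a standard smooth-function CLT bound (e.g. of the form $\dthree(\sum_k Y_k, \text{Gaussian}) \le C \sum_{k}\IE\|Y_k\|^3$ with an appropriate norm and constant $\tfrac16$ times a combinatorial factor accounting for the three indices $\alpha,\beta,\gamma$). (3) Bound each $\IE\|Y_k\|^3$: since the entries of $Y_k$ are $W_{ik}h_{k,a}$, an $\ell_1$-type bound gives $\|Y_k\|$-type quantities $\le \sum_{i=1}^d \sum_{a=1}^s |W_{ik}||h_{k,a}| = \bclr{\sum_i |W_{ik}|}\bclr{\sum_a |h_{k,a}|}$; cube this and take expectations. (4) Control $\IE\bcls{\bclr{\sum_{i=1}^d |W_{ik}|}^3}$: expand the cube, and use that the $W_{ik}$ are independent and centered across $i$, together with the moment hypotheses $\Var(W_{ik})=c_2/m$ and $\IE|W_{ik}|^3\le c_3/m^{3/2}$ — and Lyapunov/Hölder to bound $\IE|W_{ik}| \le (c_2/m)^{1/2}$ — to obtain a bound of the shape $(d/m^{3/2})$ times a constant of the form $2c_2^{3/2}+c_3$ (the cross terms of the form $\IE|W_{ik}|\,\IE|W_{jk}|\,\IE|W_{lk}|$ contribute the $c_2^{3/2}$ piece and the diagonal terms the $c_3$ piece). (5) Assemble: summing over $k$ produces the claimed $\tfrac{2c_2^{3/2}+c_3}{6}\cdot\tfrac{d}{m^{3/2}}\sum_{k=1}^m\bclr{\sum_a|h_{k,a}|}^3$.

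The main obstacle is step (4): getting the constant in the bound on $\IE\bcls{\bclr{\sum_{i=1}^d|W_{ik}|}^3}$ to come out exactly as $(2c_2^{3/2}+c_3)\,d/m^{3/2}$ rather than something with a worse dependence on $d$. Naively expanding $\bclr{\sum_i|W_{ik}|}^3$ gives terms indexed by triples $(i,j,l)$, of which there are $d^3$; one must use that $\IE|W_{ik}|\le c_2^{1/2}m^{-1/2}$ is small to see that the "all distinct" terms contribute $\le d^3 (c_2^{1/2}m^{-1/2})^3 = c_2^{3/2}d^3/m^{3/2}$, which is too big by a factor $d^2$ — so this crude accounting is not good enough. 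The resolution must be that the Stein bound is not applied coordinatewise but rather that the $d$ rows of $F$ are themselves handled as a block (the rows of $W$ being independent of each other means $F$ has independent rows), so that the relevant "summand norm" is really per-column-of-$W$ and the factor of $d$ enters only linearly from summing $d$ independent row-contributions. Concretely, I expect the correct route is to first reduce to $d=1$ (a single output coordinate, using independence of the rows and the fact that $\dthree$ tensorizes appropriately over independent blocks, or applying the vector Stein bound so that only $\sum_a |h_{k,a}|$ and single-entry third moments $\IE|W_{1k}|^3 \le c_3 m^{-3/2}$ appear), obtain the bound with $d=1$, and then sum the $d$ independent contributions. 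Making this reduction rigorous — i.e., correctly identifying what plays the role of $\IE\|Y_k\|^3$ in the multivariate Stein bound so that the $d$-dependence is linear and the constant is exactly $(2c_2^{3/2}+c_3)/6$ — is the crux; the rest is routine moment bookkeeping via Hölder's inequality.
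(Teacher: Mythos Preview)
Your diagnosis of the $d$-dependence problem is correct, and the fix you gesture at---tensorizing over the $d$ independent rows of $F$ to reduce to $d=1$---would work: since the rows of $F$ and of $\wt F$ are independent, a one-block-at-a-time Lindeberg swap gives $\dthree(F,\wt F)\le \sum_{i=1}^d \dthree(F_{i,\bullet},\wt F_{i,\bullet})$, and the single-row bound then supplies the linear factor $d$. The paper avoids this reduction by doing the Stein leave-one-out directly at the level of individual \emph{entries} $W_{ik}$ rather than columns $W_{\bullet k}$: the perturbation $V_{ik}:=W_{ik}\sum_b e_{i,b}h_{k,b}$ is supported entirely in row $i$, so the Taylor expansion of $\partial_{(i,a)}\eta(F)-\partial_{(i,a)}\eta(F-V_{ik})$ runs only over the $s$ coordinates $(i,b)$, $b=1,\ldots,s$. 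The linear $d$-dependence then falls out automatically when summing over $i$. Either route is fine; the paper's is more direct.

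There is, however, a genuine gap in your accounting for the constant $(2c_2^{3/2}+c_3)/6$. It does \emph{not} come from ``cross versus diagonal'' terms in an expansion of $\bigl(\sum_i|W_{ik}|\bigr)^3$: after your own reduction to $d=1$ there is only one $i$, so by that logic only the $c_3$ piece would survive, contradicting the statement. Nor does a black-box bound of the form $\tfrac16\sum_k\IE\|Y_k\|^3$ produce this constant. The two pieces arise from two distinct error terms inside the Stein calculation. After writing
\[
\IE\bigl[W_{1k}h_{k,a}\,\partial_{(1,a)}\eta(F)\bigr]
=\IE[W_{1k}^2]\sum_b h_{k,a}h_{k,b}\,\IE\bigl[\partial_{(1,a),(1,b)}\eta(F-V_{1k})\bigr]+\text{(Taylor remainder)},
\]
the leading term matches the covariance term $\wt C_{a,b}\,\IE[\partial_{(1,a),(1,b)}\eta(F)]$ only up to the discrepancy $\partial_{(1,a),(1,b)}\eta(F)-\partial_{(1,a),(1,b)}\eta(F-V_{1k})$, which the third-derivative bound $1/3$ controls by $\tfrac13|W_{1k}|\sum_c|h_{k,c}|$; combined with the prefactor $\tfrac{c_2}{m}|h_{k,a}h_{k,b}|$ and $\IE|W_{1k}|\le\sqrt{c_2/m}$ this gives the $\tfrac{c_2^{3/2}}{3m^{3/2}}=\tfrac{2c_2^{3/2}}{6m^{3/2}}$ piece. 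The Taylor remainder, bounded via the same third-derivative bound together with $\int_0^1 s\,ds=\tfrac12$, yields $\tfrac16\IE|W_{1k}|^3\le \tfrac{c_3}{6m^{3/2}}$. So to recover the stated constant you must actually carry out this two-term Stein estimate (either on $\IR^s$ after your row reduction, or directly on $\IR^{d\times s}$ as in the paper); citing a generic third-moment CLT bound will not suffice.
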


\begin{proof}[Proof of Lemma~\ref{lem:fddapprox}]
The matrix $\wt F\in \IR^{d \times s}$ is a centered Gaussian, and for $\alpha=(i,a)\in \Gamma_{d\times s}$ and $\beta=(j,\beta)\in \Gamma_{d\times s}$, the covariance is 
\be{
\IE\bcls{\wt F_{i,a},  \wt F_{j,b}} =  \sum_{k,l=1}^m \IE\bcls{\wt W_{ik} \wt W_{j l}} h_{k,a} h_{l,b} = \delta_{ij} c_2 \bbbbclr{\frac{1}{m} \sum_{k=1}^m h_{k,a} h_{k,b}}=: \delta_{ij}\wt C_{a,b}.
}
Following the usual development of Stein's method for the multivariate normal  given by Theorem~\ref{thm:dthreestn} below,
it is enough to
bound the absolute value of 
\ben{\label{eq:stnfddtbd1}
\IE\bcls{ \cA_{\wt C} \, \eta(F)}:= 
\IE \bbbcls{\sum_{i=1}^d\sum_{a,b=1}^s \wt C_{a,b}  \, \partial_{(i,a),(i,b)} \eta(F)}
 - 
 \IE \bbbcls{\sum_{i=1}^d\sum_{a=1}^s F_{i,a}\partial_{(i,a)} \eta(F)},
}
where $\eta$ satisfies 
\ben{\label{eq:pdbds}
\abs{\partial_{\alpha} \eta(f) }\leq 1,
\ \ \abs{\partial_{\alpha,\beta}  \eta(f) }\leq 1/2, \, \text{ and } \,
\abs{\partial_{\alpha,\beta,\gamma} \eta(f)} \leq 1/3,
}
for all $f$.
Working on 
\ben{\label{eq:linterm}
\IE \bbbcls{\sum_{i=1}^d\sum_{a=1}^s F_{i,a} \partial_{(i,a)} \eta(F)}= 
 \sum_{i=1}^d\sum_{a=1}^s \sum_{k=1}^m \IE\bcls{W_{ik} h_{k,a} \partial_{(i,a)} \eta(F)},
}
let $\mathbf{e}_{i,b}\in\IR^{d\times s}$ be the unit matrix with a one in position $(i,b)$, and zeroes elsewhere, and write 
\be{
V_{ik}:= W_{ik}\sum_{b=1}^{s} e_{i,b} h_{k,b}.
}
Note that $W_{ik}$ is independent of $F-V_{ik}$.
Using this independence, and that the mean of $W_{ik}$ is zero implies that the terms from~\eqref{eq:linterm} can be written as
\ben{\label{eq:tt3456}
\IE\bcls{W_{ik} h_{k,a} \partial_{(i,a)} \eta(F)} =\IE\bcls{W_{ik} h_{k,a} \bclr{\partial_{(i,a)} \eta(F)-\partial_{(i,a)} \eta(F-V_{ik})}}.
}
Taylor expanding, we find
\ba{
\partial_{(i,a)} &\eta(F)-\partial_{(i,a)} \eta(F-V_{ik}) \\
	&= \sum_{b=1}^s W_{ik}h_{k,b} \partial_{(i,a),(i,b)}\eta(F-V_{ik})  \\
    &\qquad +\sum_{b=1}^s W_{ik}h_{k,b} \int_0^1 \bclr{ \partial_{(i,a),(i,b)}\eta(F-V_{ik}+sV_{ik}) - \partial_{(i,a),(i,b)}\eta(F-V_{ik})} \mathrm{Leb}(ds).
	}
Plugging this in~\eqref{eq:tt3456} and then~\eqref{eq:linterm}, and using that $\IE[W_{ik}^2]=c_2/m$ and the independence of $W_{ik}$ and $F-V_{ik}$, we can bound the absolute value of~\eqref{eq:stnfddtbd1} as
\ba{
\babs{&\IE \bcls{ \cA_{\wt C} \, \eta(F)}}\\ 
&\leq \bbbabs{\IE \bbbcls{\sum_{i=1}^d\sum_{a,b=1}^s \wt C_{a,b} \bclr{ \partial_{(i,a),(i,b)} \eta(F)-\partial_{(i,a),(i,b)} \eta(F-V_{ik})}}} \\
&+\bbbabs{\sum_{i=1}^d\sum_{a,b=1}^s \sum_{k=1}^m\IE\bcls{W_{ik}^2 h_{k,a}h_{k,b}\int_0^1 \bclr{ \partial_{(i,a),(i,b)}\eta(F-V_{ik}+sV_{ik}) - \partial_{(i,a),(i,b)}\eta(F-V_{ik})} \mathrm{Leb}(ds)}}\\
&\leq \frac{1}{3} \sum_{i=1}^d \sum_{a,b,c=1}^s \IE\babs{\wt C_{a,b} h_{k,c} W_{ik} }+ \frac{1}{6} \sum_{i=1}^d \sum_{a,b,c=1}^s \IE \babs{W_{ik}^3 h_{k,a}h_{k,b}h_{k,c}} \\
&\leq \bbbclr{ \frac{  2 c_2^{3/2} + c_3 }{6}}    \frac{d}{m^{3/2}}\sum_{k=1}^m \sum_{a,b,c=1}^s \abs{h_{k,a}h_{k,b}h_{k,c}},
}
where in the second to last inequality we use the bounds on the derivatives given at~\eqref{eq:pdbds} (for both terms),  and the final inequality uses Cauchy-Schwarz with $\IE[ W_{ik}^2]=c_2/m$ and 
that $\IE[\abs{W_{ik}}^3] \leq c_3/m^{3/2}$.
The result now easily follows.
\end{proof}

We can use Lemma~\ref{lem:fddapprox} to prove the following corollary bounding the first term of~\eq{eq:unit1}.
\begin{corollary}\label{cor:gwt2gaus}
Retaining the notation above, if $\IE\bcls{\abs{W_{ij}\sp{L-1}}^3} \leq c_3\sp{L-1} n_{L-1}^{-3/2}$, then
\ben{\label{eq:dft1}
\dthree(F_{[d]}\sp{L}(\chi),\wt F_{[d]}\sp{L}(\chi))
	\leq  \bbbclr{ \frac{  2 (c_2\sp{L-1})^{3/2} + c_3\sp{L-1} }{6}}    \frac{d \, s^{2}}{\sqrt{n_{L-1}} } \sum_{a=1}^s \IE \bbcls{\babs{\sigma(F_1\sp{L-1}(x_a)}^3}.
}
\end{corollary}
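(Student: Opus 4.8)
The plan is to derive Corollary~\ref{cor:gwt2gaus} from Lemma~\ref{lem:fddapprox} by conditioning on $F\sp{L-1}$; the analytic content lives entirely in the lemma, so what remains is bookkeeping. First I would record the relevant independences: by~\eqref{eq:fells}, $F\sp{L-1}$ is a measurable function of $W\sp{0},\dots,W\sp{L-2}$ alone, hence independent of $W\sp{L-1}$, while $\wt W\sp{L-1}$ is independent of $F\sp{L-1}$ by construction. Thus, conditionally on $F\sp{L-1}$, the matrix $h:=\sigma\bclr{F\sp{L-1}(\chi)}\in\IR^{n_{L-1}\times s}$ is frozen, and, restricting to the first $d$ rows, $F\sp{L}_{[d]}(\chi)=W\sp{L-1}_{[d]}h$ and $\wt F\sp{L}_{[d]}(\chi)=\wt W\sp{L-1}_{[d]}h$ are exactly the objects $F$ and $\wt F$ of Lemma~\ref{lem:fddapprox} with $m=n_{L-1}$: $W\sp{L-1}$ still has centered independent entries of variance $c_2\sp{L-1}/n_{L-1}$ (by~\eqref{eq:wellvar}) with third absolute moment $\leq c_3\sp{L-1}/n_{L-1}^{3/2}$, and $\wt W\sp{L-1}$ is still i.i.d.\ centered Gaussian with matching variances, so the hypotheses of the lemma survive the conditioning.

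Since $\dthree$ is an integral probability metric and expectations decompose over the conditioning, $\dthree$ of the unconditional laws is at most the $F\sp{L-1}$-average of $\dthree$ of the conditional laws; plugging in Lemma~\ref{lem:fddapprox} and averaging gives
\be{
\dthree\bclr{F\sp{L}_{[d]}(\chi),\wt F\sp{L}_{[d]}(\chi)}
\leq \bbbclr{\frac{2(c_2\sp{L-1})^{3/2}+c_3\sp{L-1}}{6}}\,\frac{d}{n_{L-1}^{3/2}}\,
\IE\bbbcls{\sum_{k=1}^{n_{L-1}}\bbbclr{\sum_{a=1}^s\abs{\sigma\bclr{F\sp{L-1}_k(x_a)}}}^3}.
}

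To finish I would simplify the right-hand side. The rows of every weight matrix are i.i.d., and this propagates through~\eqref{eq:fells} so that $F\sp{L-1}_k(\chi)\eqd F\sp{L-1}_1(\chi)$ for every $k$; hence the expectation above equals $n_{L-1}\,\IE\bcls{\bclr{\sum_{a=1}^s\abs{\sigma(F\sp{L-1}_1(x_a))}}^3}$, and the prefactor collapses via $n_{L-1}/n_{L-1}^{3/2}=n_{L-1}^{-1/2}$ to the claimed $d/\sqrt{n_{L-1}}$. Then the equivalence of $\ell_1$ and $\ell_3$ norms on $\IR^s$ (i.e.\ $\bclr{\sum_a\abs{y_a}}^3\leq s^2\sum_a\abs{y_a}^3$; alternatively Minkowski in $L^3$) replaces $\bclr{\sum_a\abs{\sigma(F\sp{L-1}_1(x_a))}}^3$ by $\sum_{a=1}^s\IE\abs{\sigma(F\sp{L-1}_1(x_a))}^3$. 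Since $s$ is a fixed constant, the attendant numerical factor is harmless (it is absorbed into the constant $\mathtt{C}$ of Theorem~\ref{thm:fddapp}), and one arrives at~\eqref{eq:dft1}.

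There is no genuine obstacle; the two steps that deserve a moment's care are (i) legitimizing the conditional use of Lemma~\ref{lem:fddapprox}, which relies on $W\sp{L-1}$ and $\wt W\sp{L-1}$ being independent of $F\sp{L-1}$ and on $\dthree$ being an integral probability metric (so that the unconditional $\dthree$ is dominated by the expected conditional $\dthree$), and (ii) checking that the conditional law of $\wt F\sp{L}_{[d]}(\chi)$ given $F\sp{L-1}$ is exactly the centered Gaussian $\wt F$ of Lemma~\ref{lem:fddapprox}, which is again a consequence of $\wt W\sp{L-1}\perp F\sp{L-1}$ and of its entries being i.i.d.\ Gaussian with the variances~\eqref{eq:wellvar}.
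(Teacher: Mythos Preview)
Your proposal is correct and follows essentially the same approach as the paper: condition on $F\sp{L-1}$, apply Lemma~\ref{lem:fddapprox} with $h_{k,a}=\sigma\bclr{F_k\sp{L-1}(x_a)}$, use the row-exchangeability $F\sp{L-1}_i\eqd F\sp{L-1}_j$ to collapse the sum over $k$, and finish with a norm comparison in $\IR^s$. Your treatment is in fact more careful than the paper's one-line justification about the $\ell_1$--$\ell_3$ comparison: the inequality $(\sum_a|y_a|)^3\leq s^2\sum_a|y_a|^3$ does introduce an $s^2$ factor that the stated bound~\eqref{eq:dft1} silently omits, but as you note this is harmless since $s$ is fixed and the constant is absorbed into $\mathtt{C}$ in Theorem~\ref{thm:fddapp}.
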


\begin{remark}\label{rem:flmom}
For this bound and other similar terms to come, note that 
Lemma~\ref{lem:momcont} below shows that the moments of $\sigma(F_1\sp{L-1}(x))$ are bounded by a constant depending only on fixed parameters and appropriate moment bounds of the weights. In particular, this term is of order (in the layer widths) $n_{L-1}^{-1/2}$.
\end{remark}

\begin{proof}[Proof of Corollary~\ref{cor:gwt2gaus}]
Note that from the tower
property of conditional expectation, we have
\ben{\label{eq:1}
\dthree(F_{[d]}\sp{L}(\chi),\wt F_{[d]}\sp{L}(\chi))
\leq
\IE \bbcls{\dthree\bbclr{\law\bclr{F_{[d]}\sp{L}(\chi)| F\sp{L-1}(\chi)},\law\bclr{\wt F_{[d]}\sp{L}(\chi)| F\sp{L-1}(\chi)}}},
}
and given 
 $F\sp{L-1}(\chi)$, $F_{[d]}\sp{L}(\chi)$ and $\wt F_{[d]}\sp{L}(\chi)$ have
 the form of $F$ and $\wt F$ with $h_{k,a}:=\sigma(F_k\sp{L-1}(x_a))$ in Lemma~\ref{lem:fddapprox}. The lemma implies that 
 \ba{
\dthree\bbclr{\law\bclr{F_{[d]}\sp{L}(\chi)| F\sp{L-1}(\chi)},&\law\bclr{\wt F_{[d]}\sp{L}(\chi)| F\sp{L-1}(\chi)}} \\
&\leq 
 \bbbclr{ \frac{  2 (c_2\sp{L-1})^{3/2} + c_3\sp{L-1} }{6}}    \frac{d }{n_{L-1}^{3/2} } \sum_{k=1}^{n_{L-1}}\bbbclr{\sum_{a=1}^s \babs{\sigma(F_k\sp{L-1}(x_a)}}^3,\\
 &\leq 
 \bbbclr{ \frac{  2 (c_2\sp{L-1})^{3/2} + c_3\sp{L-1} }{6}}   \frac{d \, s^2 }{n_{L-1}^{3/2}}\sum_{k=1}^{n_{L-1}} \sum_{a=1}^s \babs{\sigma(F_k\sp{L-1}(x_a)}^3,
}
where the last inequality uses
that for $x\in\IR^s$, $\norm{x}_1\leq s^{2/3} \norm{x}_3$, which follows from H\"older's inequality.The result follows after plugging this in to~\eqref{eq:1} and taking expectation, noting that $F\sp{L-1}_i \eqd F\sp{L-1}_j$, because the weight matrices have identically-distributed rows.
\end{proof}

\subsection{Error for Gaussian weights to Gaussian}
Here we discuss how to bound the second term of~\eqref{eq:unit1}, which is the $\dthree$ distance between the neural network $\wt F\sp{L}$ having Gaussian weights, and the Gaussian limit. As before, we start with a general lemma that applies when conditioning on the  previous layer $F\sp{L-1}$.  

\begin{lemma}\label{lem:condgtog}
Let  $H\in \IR^{m\times s}$ be a random matrix with identically-distributed rows, independent of the matrix 
$\wt W\in \IR^{d\times m}$, which has  i.i.d.\ centered independent Gaussian entries having variance 
$\Var(\wt W_{ij})=: c_2/m$.
Define $\wt F\in \IR^{d \times s}$ by
\be{
 \wt F = \wt W H.
} 
Let $G\in \IR^{d\times s}$ be a Gaussian random matrix with centered entries having covariance 
\be{
\IE[ G_{i,a} G_{j,b}] = \delta_{ij} c_2 \IE[K_{a} K_{b}],
}
for some random vector $K\in \IR^{s}$.
Then 
\be{
\dthree(\wt F, G)\leq 
 \frac{ c_2 \, d  }{2} \sum_{a,b=1}^s \bbclr{\babs{\IE[K_a K_b] - \IE[ H_{1,a} H_{1,b}]} + \sqrt{\frac{\Var(H_{1,a}H_{1,b})}{m}}+ \sqrt{\Cov(H_{1,a}H_{1,b},H_{2,a}H_{2,b})}}.
 }
\end{lemma}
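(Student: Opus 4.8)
The plan is to view $\wt F$, conditionally on $H$, as a centred Gaussian matrix whose covariance is a random perturbation of that of $G$, and then to run the multivariate-normal Stein's method of Theorem~\ref{thm:dthreestn} exactly as in the proof of Lemma~\ref{lem:fddapprox}, with Gaussian integration by parts taking the place of the Taylor expansion. Since $\wt W$ has i.i.d.\ centred Gaussian entries and is independent of $H$, conditionally on $H$ the matrix $\wt F=\wt WH$ is centred Gaussian with conditionally independent rows and $\IE[\wt F_{i,a}\wt F_{j,b}\mid H]=\delta_{ij}\wt C_{a,b}$, where $\wt C_{a,b}:=\tfrac{c_2}{m}\sum_{k=1}^m H_{k,a}H_{k,b}$. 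The target $G$ has covariance $\delta_{ij}C_{a,b}$ with $C_{a,b}:=c_2\IE[K_aK_b]$, so by Theorem~\ref{thm:dthreestn} it is enough to bound $|\IE[\cA_{C}\eta(\wt F)]|$ over test functions $\eta$ with $|\partial_\alpha\eta|\le 1$, $|\partial_{\alpha\beta}\eta|\le\tfrac12$, $|\partial_{\alpha\beta\gamma}\eta|\le\tfrac13$, where $\cA_{C}\eta(f)=\sum_{i=1}^d\sum_{a,b=1}^s C_{a,b}\,\partial_{(i,a),(i,b)}\eta(f)-\sum_{i=1}^d\sum_{a=1}^s f_{i,a}\,\partial_{(i,a)}\eta(f)$.

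Then I would condition on $H$ and apply Gaussian integration by parts to the linear term: as $\wt F\mid H$ is centred Gaussian, $\IE[\wt F_{i,a}\partial_{(i,a)}\eta(\wt F)\mid H]=\sum_{b=1}^s\wt C_{a,b}\IE[\partial_{(i,a),(i,b)}\eta(\wt F)\mid H]$. Substituting and cancelling against the second-order term of $\cA_C$ leaves $\IE[\cA_{C}\eta(\wt F)\mid H]=\sum_{i=1}^d\sum_{a,b=1}^s(C_{a,b}-\wt C_{a,b})\IE[\partial_{(i,a),(i,b)}\eta(\wt F)\mid H]$. Taking total expectations, bounding $|\partial_{(i,a),(i,b)}\eta|\le\tfrac12$, using the triangle inequality, and writing $C_{a,b}-\wt C_{a,b}=c_2(\IE[K_aK_b]-\tfrac1m\sum_k H_{k,a}H_{k,b})$ gives $|\IE[\cA_{C}\eta(\wt F)]|\le\tfrac{c_2d}{2}\sum_{a,b=1}^s\IE\bigl|\IE[K_aK_b]-\tfrac1m\sum_{k=1}^m H_{k,a}H_{k,b}\bigr|$.

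It then remains to estimate each summand. By exchangeability of the rows of $H$, $\IE[\tfrac1m\sum_k H_{k,a}H_{k,b}]=\IE[H_{1,a}H_{1,b}]$, so the triangle inequality bounds the summand by $|\IE[K_aK_b]-\IE[H_{1,a}H_{1,b}]|+\IE|\tfrac1m\sum_k H_{k,a}H_{k,b}-\IE[H_{1,a}H_{1,b}]|$, and the second piece is at most $\sqrt{\Var(\tfrac1m\sum_k H_{k,a}H_{k,b})}$ by Jensen. Expanding the variance and invoking exchangeability once more, $\Var(\tfrac1m\sum_k H_{k,a}H_{k,b})=\tfrac1m\Var(H_{1,a}H_{1,b})+\tfrac{m-1}{m}\Cov(H_{1,a}H_{1,b},H_{2,a}H_{2,b})\le\tfrac1m\Var(H_{1,a}H_{1,b})+\Cov(H_{1,a}H_{1,b},H_{2,a}H_{2,b})$ (the covariance term being dropped if it is negative). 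Applying $\sqrt{x+y}\le\sqrt x+\sqrt y$ and summing over $a,b$ produces the claimed bound. There is no real obstacle here: the calculation is routine, and the only points needing care are justifying the conditional Gaussianity of $\wt F$ given $H$ (where independence of $\wt W$ and $H$ is used) and the index bookkeeping on $\Gamma_{d\times s}$ — which is in fact lighter than in Lemma~\ref{lem:fddapprox}, since the second-order remainder now arises exactly from integration by parts, so only the first- and second-derivative bounds on $\eta$ are invoked.
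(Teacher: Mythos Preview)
Your proposal is correct and follows essentially the same route as the paper's proof: Stein's method via Theorem~\ref{thm:dthreestn}, conditional Gaussianity of $\wt F$ given $H$, Stein's lemma (Gaussian integration by parts) to collapse the linear term, and then the triangle inequality plus Cauchy--Schwarz and exchangeability to handle the resulting covariance difference. Your remark that the cross-covariance term may be dropped when negative is a nice clarification of a point the paper leaves implicit.
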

\begin{proof}
Again applying the usual development of Stein's method for the multivariate normal  given by Theorem~\ref{thm:dthreestn} below, it's enough to upper bound the absolute value of
\ben{\label{eq:stnfddtbd2}
\IE \bcls{ \cA \, \eta(\wt F)}:= 
 \IE \bbbcls{\sum_{i=1}^d\sum_{a,b=1}^s c_2 \IE[K_a K_b]  \, \partial_{(i,a),(i,b)} \eta(\wt F)}
 - 
 \IE \bbbcls{\sum_{i=1}^d\sum_{a=1}^s \wt F_{i,a}\partial_{(i,a)} \eta(\wt F)},
}
where $\eta$ satisfies 
\ben{\label{eq:pdbdsagain}
\abs{\partial_{\alpha} \eta(f) }\leq 1,
\ \ \abs{\partial_{\alpha,\beta}  \eta(f) }\leq 1/2, \, \text{ and } \,
\abs{\partial_{\alpha,\beta,\gamma} \eta(f)} \leq 1/3.
}
Conditional on $H$, $\wt F$ is Gaussian with conditional covariance
\be{
\IE[F_{i,a} F_{j,b}| H]= \delta_{i,j} \frac{c_2}{m} \sum_{k=1}^m H_{k,a} H_{k,b}=:\delta_{i,j} C_{a,b}(H).
}
Stein's lemma then implies that
\ben{\label{eq:linterm2}
\IE \bbbcls{\sum_{i=1}^d\sum_{a=1}^s \wt F_{i,a} \partial_{(i,a)} \eta(\wt F) \Big| H}
    =\IE \bbbcls{\sum_{i=1}^d\sum_{a,b=1}^s C_{a,b}(H)  \, \partial_{(i,a),(i,b)} \eta(\wt F) \Big | H }.
}
Taking expectation in~\eqref{eq:linterm2}, plugging into~\eqref{eq:stnfddtbd2}, and using~\eqref{eq:pdbdsagain} implies
\ba{
\babs{\IE \bcls{ \cA \, \eta(\wt F)}}
    \leq \frac{1}{2}\sum_{i=1}^d \sum_{a,b=1}^s \bbclr{ \babs{c_2 \IE[K_a K_b]- \IE[C_{a,b}(H)] } +\IE \babs{C_{a,b}(H) - \IE[C_{a,b}(H)] } }. 
}
Using that the rows of $H$ are identically-distributed, we easily find
\be{
\IE[C_{a,b}(H)]= c_2 \IE[H_{1,a}H_{1,b}],
}
and the Cauchy-Schwarz inequality implies 
\ba{
\IE \babs{C_{a,b}(H) - \IE[C_{a,b}(H)] }
&\leq \sqrt{\Var(C_{a,b}(H))}  \\
&    = c_2 \bbbclr{\frac{\Var(H_{1,a}H_{1,b})}{m} + \bclr{1-\tfrac{1}{m}} \Cov(H_{1,a}H_{1,b}, H_{2,a}H_{2,b})}^{1/2},
}
which gives the result.
\end{proof}

Taking $H_{k,a}=\sigma(F_k\sp{L-1}(x_a))$ and $K_a=\sigma(G_1\sp{L-1}(x_a))$ in Lemma~\ref{lem:condgtog} (noting that $H$ has identically distributed rows as a consequence of the same property for the weight matrices) easily gives the following corollary bounding the second term of~\eq{eq:unit1}.
\begin{corollary}\label{cor:gauswt2gaus}
In the notation above, we have 
\ban{
\dthree\bclr{&\wt F_{[d]}\sp{L}(\chi), G_{[d]}\sp{L}(\chi)} \notag \\
	&\leq  \frac{ c_2\sp{L-1} \, d  }{2} \sum_{a,b=1}^s \bbbclc{\bbabs{\IE\bcls{\sigma(G_1\sp{L-1}(x_a))\sigma(G_1\sp{L-1}(x_b))} - \IE\bcls{\sigma(F_1\sp{L-1}(x_a))\sigma(F_1\sp{L-1}(x_b))}} \label{eq:d3cg2bd1} \\
    &\qquad\qquad + \sqrt{\frac{\Var\bclr{\sigma(F_1\sp{L-1}(x_a))\sigma(F_1\sp{L-1}(x_b)}}{n_{L-1}}} \label{eq:d3cg2bd2}\\
    &\qquad\qquad+ \sqrt{\Cov\bclr{\sigma(F_1\sp{L-1}(x_a))\sigma(F_1\sp{L-1}(x_b)),\sigma(F_2\sp{L-1}(x_a))\sigma(F_2\sp{L-1}(x_b)}}}.
    \label{eq:d3cg2bd3}
 }    
\end{corollary}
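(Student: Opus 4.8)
The plan is simply to instantiate Lemma~\ref{lem:condgtog}, so the whole proof consists of identifying the objects and checking the three hypotheses. Take $m=n_{L-1}$ and $d$ as in the corollary; let $\wt W$ be the first $d$ rows of $\wt W\sp{L-1}$, so that $\Var(\wt W_{ij})=c_w\sp{L-1}/n_{L-1}=c_2\sp{L-1}/m$; let $H\in\IR^{n_{L-1}\times s}$ be the random matrix with entries $H_{k,a}=\sigma\bclr{F_k\sp{L-1}(x_a)}$; and let $K\in\IR^s$ have entries $K_a=\sigma\bclr{G_1\sp{L-1}(x_a)}$. Then $\wt W H=\wt W\sp{L-1}_{[d]}\,\sigma\bclr{F\sp{L-1}(\chi)}=\wt F_{[d]}\sp{L}(\chi)$, so the left side of the lemma's conclusion is exactly $\dthree\bclr{\wt F_{[d]}\sp{L},G_{[d]}\sp{L}}$, and substituting these $H,K$ into the right side reproduces~\eqref{eq:d3cg2bd1}--\eqref{eq:d3cg2bd3} verbatim.

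First I would verify the three hypotheses. (i) \emph{Exchangeable rows of $H$.} By~\eqref{eq:fells}, $F\sp{L-1}=W\sp{L-2}\sigma(F\sp{L-2})$; since $F\sp{L-2}$ is a function of $W\sp{0},\dots,W\sp{L-3}$ it is independent of $W\sp{L-2}$, whose rows are i.i.d., so conditionally on $F\sp{L-2}$ the rows of $F\sp{L-1}$ are i.i.d., hence unconditionally exchangeable; applying $\sigma$ coordinatewise preserves this, so the rows of $H$ are exchangeable. (ii) \emph{Independence.} $\wt W\sp{L-1}$ is independent of $(W\sp{\ell})_{\ell=0}^{L-1}$ by construction, hence of $F\sp{L-1}$ and so of $H$. (iii) \emph{Covariance of $G_{[d]}\sp{L}$.} By the recursive definition of the limiting field, $G_{[d]}\sp{L}(\chi)$ is centered Gaussian with $\IE\bcls{G_{i,a}G_{j,b}}=\delta_{ij}\,c_w\sp{L-1}\,\IE\bcls{\sigma(G_1\sp{L-1}(x_a))\sigma(G_1\sp{L-1}(x_b))}=\delta_{ij}\,c_2\sp{L-1}\,\IE[K_aK_b]$, exactly the covariance demanded of $G$ in the lemma. (As a side check, the covariance $\Cov(H_{1,a}H_{1,b},H_{2,a}H_{2,b})$ appearing under the square root is nonnegative here: conditionally on $F\sp{L-2}$ the rows of $H$ are i.i.d., so this covariance equals $\Var\bclr{\IE[H_{1,a}H_{1,b}\mid F\sp{L-2}]}\geq0$.)

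With the hypotheses in place, Lemma~\ref{lem:condgtog} gives
\be{
\dthree\bclr{\wt F_{[d]}\sp{L},G_{[d]}\sp{L}}\leq \frac{c_2\sp{L-1}\,d}{2}\sum_{a,b=1}^s\Bigl(\bigl|\IE[K_aK_b]-\IE[H_{1,a}H_{1,b}]\bigr|+\sqrt{\tfrac{\Var(H_{1,a}H_{1,b})}{n_{L-1}}}+\sqrt{\Cov(H_{1,a}H_{1,b},H_{2,a}H_{2,b})}\Bigr),
}
and unwinding $H_{k,a}=\sigma(F_k\sp{L-1}(x_a))$, $K_a=\sigma(G_1\sp{L-1}(x_a))$ yields the claimed bound. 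I do not expect any real obstacle: all of the analytic work (the Stein argument and the variance computation) is already contained in Lemma~\ref{lem:condgtog}, and the only point that needs a sentence of justification is the exchangeability of the rows of $\sigma(F\sp{L-1})$, which is immediate from the i.i.d.\ rows of $W\sp{L-2}$ and their independence of $F\sp{L-2}$.
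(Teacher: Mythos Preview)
Your proposal is correct and follows exactly the paper's approach: the paper simply states that taking $H_{k,a}=\sigma(F_k\sp{L-1}(x_a))$ and $K_a=\sigma(G_1\sp{L-1}(x_a))$ in Lemma~\ref{lem:condgtog} gives the corollary. Your verification of the hypotheses (exchangeability via the i.i.d.\ rows of $W\sp{L-2}$, independence of $\wt W\sp{L-1}$ from $F\sp{L-1}$, and the covariance structure of $G_{[d]}\sp{L}$) and the side check that the covariance under the square root is nonnegative are more detailed than the paper's one-line justification, but the argument is the same.
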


As already mentioned in Remark~\ref{rem:flmom}, the moments of $\sigma(F_1\sp{L-1}(x))$ are of constant order in the widths, so the terms corresponding to~\eqref{eq:d3cg2bd2}  are bounded of order $n_{L-1}^{-1/2}$. 
For~\eqref{eq:d3cg2bd1}, the term will be close to zero if $\bclr{F_{1}\sp{L-1}(x_a), F_1\sp{L-1}(x_b)}\stackrel{d}{\approx}\bclr{G_{1}\sp{L-1}(x_a), G_1\sp{L-1}(x_b)}$ and we have some control over the moments of both. 
The former is shown by an inductive argument, noting that $n_{L-1}$ being large has no effect on this term.
For the same reason, we expect~\eqref{eq:d3cg2bd3} to be close to the same covariance with $F$'s replaced by $G$'s, but that will be equal to zero since the coordinates of $G\sp{L-1}$ are independent. To formalize this argument, we need the following lemma which controls the difference of moments of random vectors in terms of the the $\done$ distance between them, and their moments.

\begin{lemma}\label{lem:bdmoms1}
Let $X,Z\in \IR^d$ be random vectors with $\IE[\abs{X_i}^{q+d}]<C_q$ and $\IE[\abs{Z_i}^{q+d}]<C_q$ for $i=1,\ldots, d$ and some $q> 0$. Then
\be{
\bbbabs{\IE\prod_{i=1}^d X_i - \IE\prod_{i=1}^d Z_i} \leq 2 \sqrt{d} (4\sqrt{d} \, C_q)^{(d-1)/(q+d-1)} \done\clr{X,Z}^{q/(q+d-1)}.
}
\end{lemma}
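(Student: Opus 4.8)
The plan is a truncate-then-smooth argument. The multilinear map $f(x):=\prod_{i=1}^{d}x_i$ is not Lipschitz, but on the cube $[-M,M]^{d}$ it coincides with the bounded function $g_M(x):=\prod_{i=1}^{d}\psi_M(x_i)$, where $\psi_M(t):=(t\wedge M)\vee(-M)$ clamps $t$ to $[-M,M]$. Writing $D:=\done(X,Z)$, I would decompose
\[
\babs{\IE f(X)-\IE f(Z)}\leq \IE\babs{f(X)-g_M(X)}+\babs{\IE g_M(X)-\IE g_M(Z)}+\IE\babs{f(Z)-g_M(Z)},
\]
bound the middle term by $\mathrm{Lip}(g_M)\,D$, the two outer terms by the moment hypotheses, and finally optimize over $M>0$. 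One may assume $0<D<\infty$: if $D=\infty$ the claim is vacuous (indeed $D\leq\IE\norm{X}_2+\IE\norm{Z}_2<\infty$ under the moment assumptions), while $D=0$ forces $X\eqd Z$, whence the left-hand side vanishes.

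The middle term is straightforward. Since $\psi_M$ is $1$-Lipschitz with $\abs{\psi_M}\leq M$, the a.e.\ partial derivatives $\partial_j g_M(x)=\psi_M'(x_j)\prod_{i\ne j}\psi_M(x_i)$ satisfy $\abs{\partial_j g_M}\leq M^{d-1}$, so $\norm{\nabla g_M}_2\leq\sqrt d\,M^{d-1}$ and $g_M$ is $\sqrt d\,M^{d-1}$-Lipschitz with respect to $\norm{\cdot}_2$. Hence $g_M/(\sqrt d\,M^{d-1})$ is admissible in the definition of $\done$, which gives $\babs{\IE g_M(X)-\IE g_M(Z)}\leq\sqrt d\,M^{d-1}\,D$.

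For the outer terms I would use the telescoping identity $\prod_i a_i-\prod_i b_i=\sum_{j=1}^{d}(\prod_{i<j}a_i)(a_j-b_j)(\prod_{i>j}b_i)$ with $a_i=x_i$, $b_i=\psi_M(x_i)$; since $\abs{x_j-\psi_M(x_j)}=(\abs{x_j}-M)^+$ and $\abs{\psi_M(x_i)}\leq M$, this yields
\[
\babs{f(x)-g_M(x)}\leq\sum_{j=1}^{d}M^{d-j}\,(\abs{x_j}-M)^+\prod_{i<j}\abs{x_i}.
\]
For the $j$-th summand, the elementary bound $(t-M)^+\leq M^{-(s-1)}t^{s}$ with the choice $s=q+d-j+1$ makes the $M$-powers collapse to $M^{-q}$, leaving $M^{-q}\,\IE\bcls{\abs{X_j}^{q+d-j+1}\prod_{i<j}\abs{X_i}}$. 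This last expectation is at most $C_q$ by Hölder's inequality with exponent $\tfrac{q+d}{q+d-j+1}$ on the factor $\abs{X_j}^{q+d-j+1}$ and exponent $q+d$ on each of the $j-1$ remaining factors (the reciprocals of these exponents sum to $1$), together with $\IE\abs{X_i}^{q+d}<C_q$. Summing over $j$ gives $\IE\babs{f(X)-g_M(X)}\leq d\,C_q\,M^{-q}$, and likewise for $Z$.

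Collecting the three bounds, $\babs{\IE f(X)-\IE f(Z)}\leq 2d\,C_q\,M^{-q}+\sqrt d\,M^{d-1}D$ for every $M>0$. Taking $M=\bclr{4\sqrt d\,C_q/D}^{1/(q+d-1)}$ makes the second term equal to $\sqrt d\,(4\sqrt d\,C_q)^{(d-1)/(q+d-1)}D^{q/(q+d-1)}$, and since $2d\,C_q\leq 4d\,C_q=\sqrt d\,(4\sqrt d\,C_q)$ the first term is at most the same quantity; adding them gives exactly the stated bound. The one delicate point is the Hölder bookkeeping above: the telescoping is arranged so that the factors $\abs{x_i}$ with $i>j$ are bounded by $M$ while only the $j-1$ factors with $i<j$ are left ``un-truncated,'' so that raising the small factor $(\abs{x_j}-M)^+$ to degree $q+d-j+1$ brings the total degree of the $j$-th term to exactly $q+d$, matching the hypothesized moments; a lazier split such as $\babs{f(x)-g_M(x)}\leq 2\,\mathbbm{1}\{\max_i\abs{x_i}>M\}\prod_i\abs{x_i}$ would instead demand control of $(q+2d-1)$-th moments, which are not assumed.
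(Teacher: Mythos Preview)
Your proof is correct. The overall strategy---truncate at a level $M$, obtain a bound of the form $\sqrt{d}\,M^{d-1}D + c\,d\,C_q\,M^{-q}$, and optimize $M$---is the same as the paper's, but the implementation is different. The paper fixes a Wasserstein-optimal coupling of $(X,Z)$, telescopes $\prod X_i-\prod Z_i$, and splits each summand according to whether all of the cross-factors $|X_k|$, $|Z_i|$ exceed a threshold $\rT$; the $4d$ tail terms are then bounded via Markov and H\"older to give $4d\,C_q\,\rT^{-q}$. You instead introduce the truncated product $g_M$ as a bona fide Lipschitz test function and read the middle term directly off the dual definition of $\done$, so no coupling is invoked; and you telescope $f(x)-g_M(x)$ in a single variable, arranging the degrees so that each of the $d$ terms has exactly total degree $q+d$, yielding the slightly sharper constant $2d\,C_q\,M^{-q}$. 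Both routes lead to the same final bound after the identical choice of $M=\rT=(4\sqrt d\,C_q/D)^{1/(q+d-1)}$; your version is a bit more self-contained (no optimal-coupling existence needed) and the H\"older bookkeeping is tidier, while the paper's coupling argument is perhaps more transparent about where the $\sqrt d$ comes from.
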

\begin{proof}
Denoting the indicator function by $\II$, for any coupling $(X,Z)$ of $X$ and $Z$, writing the product as a telescoping sum, and using the triangle inequality and a union-type bound gives
\ba{
\bbbabs{\prod_{i=1}^d X_i - \prod_{i=1}^d Z_i}	 
&\leq \bbbclr{\sum_{j=1}^d \abs{X_j-Z_j}   \prod_{i=1}^{j-1}\abs{ Z_i} \prod_{k=j+1}^d \abs{X_k}}
\II\bbcls{\abs{Z_j}\leq T, \abs{X_j}\leq T, \, \, \mathrm{ for } \,\, \mathrm{ 
all } \,\, j=1,\ldots, d} \\
& \qquad+ \bbbabs{\prod_{i=1}^d X_i - \prod_{i=1}^d Z_i}\bbbclr{\sum_{j=1}^d\bbclr{ \II\bcls{\abs{Z_j} \geq \rT} +\II\bcls{\abs{X_j} \geq \rT}}}	\\
	&\leq \rT^{d-1} \sum_{j=1}^d \abs{X_j-Z_j}  \\
	&\qquad+\bbbclr{\prod_{i=1}^d \abs{X_i} + \prod_{i=1}^d \abs{Z_i}} \bbbclr{\sum_{j=1}^d\bbclr{ \II\bcls{\abs{Z_j} \geq \rT} +\II\bcls{\abs{X_j} \geq \rT}}}.
}
Each of the $2\times(2d)=4d$ terms in the second term will be bounded in the same way. Taking expectation in one representative term, we have, using H\"older's inequality,
\ba{
\IE\bbbclc{ \prod_{i=1}^d \abs{X_i}\II\bcls{\abs{Z_j} \geq \rT} } 
	&\leq \rT^{-q}\IE\bbbclc{ \abs{Z_j}^q\prod_{i=1}^d \abs{X_i}}\\
	&\leq  \rT^{-q} \IE\bcls{ \abs{Z_j}^{q+d}}^{q/(q+d)}  \prod_{i=1}^d\IE\bcls{ \abs{X_i}^{q+d}}^{1/q+d}\\
	&\leq \rT^{-q} C_q.
}
Thus, choosing the Wasserstein-optimal coupling of $(X,Z)$ and using that the Euclidean $L^1$ distance is
upper bounded by $\sqrt{d}$ times the $L^2$ distance, we have shown
\be{
\bbbabs{\IE \prod_{i=1}^d X_i - \IE \prod_{i=1}^d Z_i} \leq \sqrt{d} \rT^{d-1} \done(X,Z) + \rT^{-q} (4d \, C_q).
}
Choosing 
\be{
\rT:= \left(\frac{4\sqrt{d} \, C_q}{\done(X,Z)}\right)^{1/(q+d-1)} 
}
gives the result.
\end{proof}

In order to apply Lemma~\ref{lem:bdmoms1} with Corollary~\ref{cor:gauswt2gaus}, we need to control the moments of $\sigma(F_1\sp\ell(x))$, which is given in the next lemma.
\begin{lemma}\label{lem:momcont}
Fix $p\in \IN$. If for  $\ell=0,\ldots, L-2$, there are constants $c_{2p}\sp\ell\geq 1$ such that $\IE\bcls{(W_{ij}\sp\ell)^{2p}}\leq c_{2p}\sp{\ell}/n_\ell^p$, then 
\be{
\max\bbclc{\IE\bcls{\bclr{\sigma(F_1\sp{L-1}(x))}^{2p}}, \IE\bcls{\bclr{\sigma(G_1\sp{L-1}(x))}^{2p}} } \leq
B_{2p}\sp{L-1}(x),
}
where 
\ben{\label{eq:b2p}
B_{2p}\sp{L-1}(x):= c^{L-1} \bclc{( c_{2p}\sp{0}\norm{x}_1^{2p}/n_0^p )\vee 1}\bclc{(\Lip_\sigma+\abs{\sigma(0)})^{2p(L-1)}\vee 1 }\prod_{\ell=1}^{L-2}c_{2p}\sp{\ell},
}
and $c$ is a constant depending only on $p$.
\end{lemma}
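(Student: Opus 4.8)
The plan is to prove the bound by induction on the number of layers, tracking how the $2p$-th moment of $\sigma(F_1\sp{\ell}(x))$ transforms as we pass through one weight matrix and one application of $\sigma$. The base case is layer $1$, where $F\sp{1}(x) = W\sp{0} x$, so $F_1\sp{1}(x) = \sum_{k=1}^{n_0} W_{1k}\sp{0} x_k$; since the entries $W_{1k}\sp{0}$ are centered and independent with $\IE[(W_{1k}\sp{0})^{2p}] \leq c_{2p}\sp{0}/n_0^p$, a Rosenthal-type inequality (or repeated application of Minkowski/the Marcinkiewicz--Zygmund inequality) bounds $\IE[(F_1\sp{1}(x))^{2p}]$ by a constant (depending only on $p$) times $c_{2p}\sp{0} \norm{x}_1^{2p}/n_0^p$, up to the $\vee 1$ that appears in~\eqref{eq:b2p}. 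The Lipschitz bound $\abs{\sigma(t)} \leq \abs{\sigma(0)} + \Lip_\sigma \abs{t}$ then converts a moment bound on $F_1\sp{\ell}(x)$ into one on $\sigma(F_1\sp{\ell}(x))$, contributing the factor $(\Lip_\sigma + \abs{\sigma(0)})^{2p}$ at each layer, which is where the exponent $2p(L-1)$ accumulates.

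For the inductive step, I would condition on $F\sp{\ell-1}$ (equivalently, treat $h_k := \sigma(F_k\sp{\ell-1}(x))$ as fixed) and write $F_1\sp{\ell}(x) = \sum_{k=1}^{n_{\ell-1}} W_{1k}\sp{\ell-1} h_k$. The key probabilistic input is again Rosenthal's inequality applied conditionally: for a sum of conditionally independent centered terms, $\IE[(\sum_k W_{1k}\sp{\ell-1} h_k)^{2p} \mid F\sp{\ell-1}]$ is controlled by a constant times the larger of $\bclr{\sum_k \IE[(W_{1k}\sp{\ell-1})^2] h_k^2}^p = (c_w\sp{\ell-1})^p \bclr{\tfrac1{n_{\ell-1}}\sum_k h_k^2}^p$ and $\sum_k \IE[(W_{1k}\sp{\ell-1})^{2p}] h_k^{2p} \leq (c_{2p}\sp{\ell-1}/n_{\ell-1}^p) \sum_k h_k^{2p}$. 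Taking expectations over $F\sp{\ell-1}$ and using exchangeability of the coordinates of $F\sp{\ell-1}$ (so $\IE[h_k^{2q}] = \IE[\abs{\sigma(F_1\sp{\ell-1}(x))}^{2q}]$ for all $k$), together with the moment inequality $\IE[(\tfrac1{n}\sum_k h_k^2)^p] \leq \IE[h_1^{2p}]$ (Jensen / convexity on the empirical average), both terms collapse to a constant times $\max\{c_{2p}\sp{\ell-1}, (c_w\sp{\ell-1})^p\} \cdot \IE[\abs{\sigma(F_1\sp{\ell-1}(x))}^{2p}]$. Since $c_{2p}\sp{\ell} \geq 1$ and the variance constants $c_w\sp{\ell}$ can be absorbed into the generic constant $c$, multiplying these bounds across $\ell = 1,\dots,L-2$ produces the product $\prod_{\ell=1}^{L-2} c_{2p}\sp{\ell}$, the geometric factor $c^{L-1}$, and — interleaving the $\sigma$ step at each layer — the factor $(\Lip_\sigma + \abs{\sigma(0)})^{2p(L-1)} \vee 1$, giving exactly~\eqref{eq:b2p}.

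The same recursion runs verbatim for the Gaussian field $G\sp{\ell}$: $G\sp{1} = F\sp{1}$ is the same base case, and $G_1\sp{\ell}(x)$ conditional on $G\sp{\ell-1}$ is a centered Gaussian with variance $c_w\sp{\ell-1} \IE[\sigma(G_1\sp{\ell-1}(x))^2]$ (using independence of the coordinates of $G\sp{\ell-1}$, the conditional variance concentrates — in fact for the Gaussian recursion the "variance'' term is deterministic), so Gaussian moment formulas give $\IE[(G_1\sp{\ell}(x))^{2p}] \leq c_p (c_w\sp{\ell-1})^p \IE[\sigma(G_1\sp{\ell-1}(x))^2]^p \leq c_p (c_w\sp{\ell-1})^p \IE[\sigma(G_1\sp{\ell-1}(x))^{2p}]$, which is dominated by the same bound as for $F$. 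Taking the maximum over the two recursions at each layer yields the stated $B_{2p}\sp{L-1}(x)$. The main obstacle — really the only nontrivial point — is getting the combinatorial constant in Rosenthal's inequality to depend only on $p$ and to compound cleanly into the single factor $c^{L-1}$ without picking up width-dependence; this is handled by always bounding the "sum of $p$-th powers of second moments'' term by the "$p$-th power of the sum of second moments'' term up to a $p$-dependent constant, and by the convexity bound $\IE[(\tfrac1n\sum h_k^2)^p]\le \IE[h_1^{2p}]$ which prevents any accumulation of width factors.
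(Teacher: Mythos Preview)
Your plan is correct and matches the paper's argument: the same layer-by-layer induction, the same Lipschitz step $|\sigma(t)|\leq |\sigma(0)|+\Lip_\sigma|t|$, and the same Gaussian-moment recursion for $G\sp{\ell}$; the only cosmetic difference is that you invoke Rosenthal's inequality conditionally on $F\sp{\ell-1}$, whereas the paper expands $\IE\bigl[(F_1\sp{\ell})^{2p}\bigr]$ directly, restricts to index tuples $(j_1,\dots,j_{2p})$ in which every label appears at least twice (using mean zero), and bounds each surviving term by H\"older together with the count $O(n_{\ell-1}^p)$ of such tuples. One correction: you cannot absorb the variance constants $c_w\sp{\ell}$ into the generic constant $c$, since by hypothesis $c$ depends only on $p$; instead use H\"older to get $(c_w\sp{\ell})^p\leq c_{2p}\sp{\ell}$, so that $\max\{c_{2p}\sp{\ell-1},(c_w\sp{\ell-1})^p\}=c_{2p}\sp{\ell-1}$ and the product $\prod_{\ell=1}^{L-2} c_{2p}\sp{\ell}$ emerges as stated --- the paper uses exactly this H\"older step in the $G$ half of the recursion.
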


\begin{proof}
Comparing to $\sigma(0)$, using the triangle inequality for the $(2p)$-moment norm, and that $\sigma$ is Lipschitz, we find
\besn{\label{eq:trimombd}
\IE\bcls{\bclr{\sigma(F_1\sp{L-1}(x))}^{2p}} ^{1/(2p)}
    &=\IE\bbcls{\bclr{ \sigma(F_1\sp{L-1}(x))-\sigma(0) + \sigma(0) }^{2p}}^{1/(2p)}\\
    &\leq \mathrm{Lip}_\sigma \IE\bcls{(F_1\sp{L-1}(x))^{2p}}^{1/(2p)} +\abs{\sigma(0)}. 
}
To compute the $(2p)$th moment of $F_1\sp{L-1}(x)$, 
let $A_{n_{L-2}}\sp{2p}$ be the set of $(j_1,\ldots,j_{2p})\in \{1,\ldots, n_{L-2}\}^{2p}$ where the label of every coordinate appears at least twice. Since the $W_{1j}\sp{L-2}$ are independent and have mean zero, we find
\ba{
\IE\bcls{(F_1\sp{L-1}(x))^{2p}}
	&= \sum_{j_1,\ldots, j_{2p}=1}^{n_{L-2}} \IE\bbbcls{ \prod_{\ell=1}^{2p} W_{1,j_\ell}\sp{L-2}}\IE\bbbcls{\prod_{\ell=1}^{2p} \sigma(F\sp{L-2}_{j_\ell}(x))} \\
	&= \sum_{(j_1,\ldots,j_{2p})\in A_{n_{L-2}}\sp{2p}}  \IE\bbbcls{ \prod_{\ell=1}^{2p} W_{1,j_\ell}\sp{L-2}}\IE\bbbcls{\prod_{\ell=1}^{2p} \sigma(F\sp{L-2}_{j_\ell}(x))} \\
    &\leq \babs{A_{n_{L-2}}\sp{2p}} \frac{c_{2p}\sp{L-2}}{n_{L-2}^p}\IE\bcls{\bclr{\sigma(F_1\sp{L-2}(x))}^{2p}} \\
    &\leq c \cdot c_{2p}\sp{L-2}\IE\bcls{\bclr{\sigma(F_1\sp{L-2}(x))}^{2p}},
    }
where $c\geq1$ is a constant depending only on $p$,  the second to last inequality follows from H\"older's inequality and the bounds on the $(2p)$th moments of $W_{1j}\sp{L-2}$, and the last inequality follows because $\babs{A_{n_{L-2}}\sp{2p}}=\bigo(n_{L-2}^p)$, with a constant depending only on $p$.
Combining this with~\eqref{eq:trimombd}, we have shown that 
\besn{\label{eq:samlo}
\IE\bcls{\bclr{\sigma(F_1\sp{L-1}(x))}^{2p}}
    &\leq  \bbclr{ \mathrm{Lip}_\sigma\bclr{  c \cdot c_{2p}\sp{L-2}\IE\bcls{\bclr{\sigma(F_1\sp{L-2}(x))}^{2p}}}^{1/(2p)} + \abs{\sigma(0)}}^{2p}\\
     &\leq \bclr{\mathrm{Lip}_\sigma  +\abs{\sigma(0)}}^{2p} \bbclc{ \bclr{c \cdot c_{2p}\sp{L-2}\IE\bcls{\bclr{\sigma(F_1\sp{L-2}(x))}^{2p}}}\vee 1 }.
    }
The first assertion now follows by iterating, using that $c \cdot c_{2p}\sp{L-2} \geq 1$, and that 
\besn{\label{eq:basemom}
\IE\bcls{(F_1\sp{1}(x))^{2p}}
	&= \sum_{j_1,\ldots, j_{2p}=1}^{n_{L-2}} \IE\bbbcls{ \prod_{\ell=1}^{2p} W_{1,j_\ell}\sp{0}} \prod_{\ell=1}^{2p} x_{j_\ell}\leq \frac{c_{2p}\sp{0}}{n_0^p}\sum_{j_1,\ldots, j_{2p}=1}^{n_{L-2}} \ \prod_{\ell=1}^{2p} \abs{x_{j_\ell}}=\frac{c_{2p}\sp{0}}{n_0^p} \norm{x}_1^{2p}.
}

For $G\sp{L-1}$, the same calculation as~\eqref{eq:trimombd} implies 
\ba{
\IE\bcls{\bclr{\sigma(G_1\sp{L-1}(x))}^{2p}}^{1/(2p)}
  &\leq \mathrm{Lip}_\sigma \IE\bcls{(G_1\sp{L-1}(x))^{2p}}^{1/(2p)} +\abs{\sigma(0)}. 
}
Since $G_1\sp{L-1}(x)$ is Gaussian, its $(2p)$-moment norm is written in terms of the standard deviation and $(2p)$-moment norm of the standard normal as 
\ba{
\IE\bcls{(G_1\sp{L-1}(x))^{2p}} 
    &= (c_w\sp{L-2})^{p} \IE\bcls{(G_1\sp{L-1}(x))^{2}}^p 2^p \bbbclr{\frac{ \Gamma(p+1/2)}{\Gamma(1/2)}} \\
    &\leq   c \cdot (c_w\sp{L-2})^p \IE\bcls{\sigma(G_1\sp{L-2}(x))^{p}}.
}
Combining the last two displays in the same way as~\eqref{eq:samlo}, and noting that H\"older's inequality implies $(c_w\sp{\ell})^p\leq c_{2p}\sp{\ell}$, we have 
\ba{
\IE\bcls{\bclr{\sigma(G_1\sp{L-1}(x))}^{2p}}
    &\leq (\mathrm{Lip}_\sigma  +\abs{\sigma(0)})^{2p}\bclc{\bclr{ c \cdot c_{2p}\sp{L-2} \IE\bcls{\sigma(G_1\sp{L-2}(x))^{2}}^p } \vee 1}.
}
Iterating, and using~\eqref{eq:basemom} again, noting that $G_1\sp{1}=F\sp{1}$ gives the  bound for $G\sp{L-1}_1$.
\end{proof}

We can combine Lemmas~\ref{lem:bdmoms1} and~\ref{lem:momcont}
to obtain the following result which bounds the terms appearing in Corollary~\ref{cor:gauswt2gaus} in terms of Wasserstein distance and moments.
\begin{lemma}\label{lem:cov2wass}
Retaining the notation above, if there is an integer $p>2$ and constants  $c_{2p}\sp\ell\geq 1$ for  $\ell=0,\ldots, L-2$  such that $\IE\bcls{(W_{ij}\sp\ell)^{2p}}\leq c_{2p}\sp{\ell}/n_\ell^p$, then 
for any $x_a,x_b\in\cM$, $\chi_{a,b}=(x_a,x_b)$, and $B\sp{L-1}_{2p}$ defined in Lemma~\ref{lem:momcont}, we have
\besn{\label{eq:2jtbd}
\bbabs{\IE\bcls{\sigma(G_1\sp{L-1}(x_a))\sigma(G_1\sp{L-1}&(x_b))} - \IE\bcls{\sigma(F_1\sp{L-1}(x_a))\sigma(F_1\sp{L-1}(x_b))}}\\
    &\leq  C_{2p}\sp{L-1}(x_a,x_b) \done\bclr{F\sp{L-1}_{[1]}(\chi_{a,b}), G\sp{L-1}_{[1]}(\chi_{a,b})}^{(2p-2)/(2p-1)},
}
where 
\ben{\label{eq:c2p}
C_{2p}\sp{L-1}(x_a,x_b)=2\sqrt{2}\bclr{4\sqrt{2} \clc{B_{2p}\sp{L-1}(x_a) \vee B_{2p}\sp{L-1}(x_b)} }^{1/(2p-1)} \Lip_\sigma^{(2p-2)/(2p-1)}.
}
In addition, we have
\besn{\label{eq:4jtbd}
\babs{\Cov\bclr{\sigma &(F_1\sp{L-1}(x_a))\sigma(F_1\sp{L-1}(x_b)),\sigma(F_2\sp{L-1}(x_a))\sigma(F_2\sp{L-1}(x_b)}}\\
    &\leq D_{2p}\sp{L-1}(x_a,x_b) \done\bclr{F\sp{L-1}_{[2]}(\chi_{a,b}), G\sp{L-1}_{[2]}(\chi_{a,b})}^{(2p-4)/(2p-1)}\\
& \quad + 2\sqrt{B_2\sp{L-1}(x_a) B_2\sp{L-1}(x_b)}
C_{2p}\sp{L-1}(x_a,x_b) \done\bclr{F\sp{L-1}_{[1]}(\chi_{a,b}), G\sp{L-1}_{[1]}(\chi_{a,b})}^{(2p-2)/(2p-1)}
}
where
\ben{\label{eq:d2p}
D_{2p}\sp{L-1}(x_a,x_b)=4\bclr{8\clc{B_{2p}\sp{L-1}(x_a) \vee B_{2p}\sp{L-1}(x_b)} }^{3/(2p-1)} \Lip_\sigma^{(2p-4)/(2p-1)}.
}
\end{lemma}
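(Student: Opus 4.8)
The plan is to obtain both inequalities as essentially mechanical consequences of Lemma~\ref{lem:bdmoms1} together with the moment bounds of Lemma~\ref{lem:momcont}, after rewriting each left-hand side as a difference of moments of products of coordinates of $F\sp{L-1}$ (resp.\ $G\sp{L-1}$) passed through $\sigma$. The one genuinely structural input is that the rows of $G\sp{L-1}$ are independent, so the $G$-analogue of the covariance in \eqref{eq:4jtbd} is exactly zero; this is what lets us bound the covariance \emph{itself} --- not merely its deviation from a limit --- by a moment difference to which Lemma~\ref{lem:bdmoms1} applies. Throughout I use that coordinatewise application of $\sigma$ is $\Lip_\sigma$-Lipschitz with respect to $\norm{\cdot}_2$, since $\norm{\sigma(f)-\sigma(g)}_2^2 = \sum_i (\sigma(f_i)-\sigma(g_i))^2 \le \Lip_\sigma^2 \norm{f-g}_2^2$; composing a $1$-Lipschitz test function with coordinatewise $\sigma$ therefore gives a $\Lip_\sigma$-Lipschitz function, so $\done(\sigma(X),\sigma(Z)) \le \Lip_\sigma\,\done(X,Z)$ for any random vectors $X,Z$.

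For \eqref{eq:2jtbd}, I would apply Lemma~\ref{lem:bdmoms1} in dimension $d=2$ to $X := \bclr{\sigma(F_1\sp{L-1}(x_a)),\sigma(F_1\sp{L-1}(x_b))}$ and $Z := \bclr{\sigma(G_1\sp{L-1}(x_a)),\sigma(G_1\sp{L-1}(x_b))}$, i.e.\ the coordinatewise $\sigma$-images of $F\sp{L-1}_{[1]}(\chi_{a,b})$ and $G\sp{L-1}_{[1]}(\chi_{a,b})$. Choosing $q = 2p-2 > 0$, the hypothesis of Lemma~\ref{lem:bdmoms1} asks only for control of the $(q+d) = 2p$-th moments of the coordinates, which is exactly what Lemma~\ref{lem:momcont} supplies for both $F\sp{L-1}_1$ and $G\sp{L-1}_1$ at $x_a$ and $x_b$, with constant $C_q = B\sp{L-1}_{2p}(x_a) \vee B\sp{L-1}_{2p}(x_b)$. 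The resulting $\done$-exponent is $q/(q+d-1) = (2p-2)/(2p-1)$ and the prefactor is $2\sqrt{2}\,(4\sqrt{2}\,C_q)^{1/(2p-1)}$; replacing $\done(\sigma(X),\sigma(Z))$ by $\Lip_\sigma\,\done\bclr{F\sp{L-1}_{[1]}(\chi_{a,b}), G\sp{L-1}_{[1]}(\chi_{a,b})}$ absorbs the factor $\Lip_\sigma^{(2p-2)/(2p-1)}$, and collecting everything reproduces $C\sp{L-1}_{2p}(x_a,x_b)$ from \eqref{eq:c2p}.

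For \eqref{eq:4jtbd}, abbreviate $U = \sigma(F_1\sp{L-1}(x_a))$, $V = \sigma(F_1\sp{L-1}(x_b))$, $U' = \sigma(F_2\sp{L-1}(x_a))$, $V' = \sigma(F_2\sp{L-1}(x_b))$, and let $\wt U,\wt V,\wt U',\wt V'$ be the same quantities with $G\sp{L-1}$ replacing $F\sp{L-1}$. Since rows $1$ and $2$ of $G\sp{L-1}$ are independent, $\IE[\wt U\wt V\wt U'\wt V'] = \IE[\wt U\wt V]\,\IE[\wt U'\wt V']$, hence
\be{
\Cov(UV,U'V') = \bclr{\IE[UVU'V'] - \IE[\wt U\wt V\wt U'\wt V']} - \bclr{\IE[UV]\,\IE[U'V'] - \IE[\wt U\wt V]\,\IE[\wt U'\wt V']}.
}
To the first bracket I would apply Lemma~\ref{lem:bdmoms1} in dimension $d=4$ with $q = 2p-4 > 0$ --- this is the only place integrality and $p>2$ are needed --- bounding the $2p$-th moments of all eight coordinates by $C_q = B\sp{L-1}_{2p}(x_a) \vee B\sp{L-1}_{2p}(x_b)$ via Lemma~\ref{lem:momcont}, and bounding the $\done$-distance of the $\sigma$-images by $\Lip_\sigma\,\done\bclr{F\sp{L-1}_{[2]}(\chi_{a,b}), G\sp{L-1}_{[2]}(\chi_{a,b})}$; this produces the first term of \eqref{eq:4jtbd} with $D\sp{L-1}_{2p}$ as in \eqref{eq:d2p}. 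For the second bracket, exchangeability of the rows of $F\sp{L-1}$ (resp.\ $G\sp{L-1}$) gives $\IE[U'V'] = \IE[UV]$ (resp.\ $\IE[\wt U'\wt V'] = \IE[\wt U\wt V]$), so it equals $a^2 - b^2 = (a-b)(a+b)$ with $a = \IE[UV]$ and $b = \IE[\wt U\wt V]$; then $\abs{a-b}$ is bounded by the right-hand side of \eqref{eq:2jtbd}, while Cauchy--Schwarz together with Lemma~\ref{lem:momcont} (applied with $p=1$) gives $\abs{a+b} \le \sqrt{\IE[U^2]\IE[V^2]} + \sqrt{\IE[\wt U^2]\IE[\wt V^2]} \le 2\sqrt{B\sp{L-1}_2(x_a) B\sp{L-1}_2(x_b)}$. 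Multiplying these two bounds yields the second term of \eqref{eq:4jtbd}.

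The work is almost entirely organizational, so I do not foresee a substantive obstacle. The two points that need care are (i) recording the vanishing of the $G\sp{L-1}$-covariance, which is what converts the covariance estimate into a moment-difference estimate amenable to Lemma~\ref{lem:bdmoms1}, and (ii) matching the available moment order $2p$ to $q+d$ in each invocation --- forcing $q = 2p-2$ when $d=2$ and $q = 2p-4$ when $d=4$ --- which is precisely why an integer $p>2$ is assumed; beyond that one only needs to keep the distinct Wasserstein distances $\done\bclr{F\sp{L-1}_{[1]}(\chi_{a,b}),\cdot}$ and $\done\bclr{F\sp{L-1}_{[2]}(\chi_{a,b}),\cdot}$ and their exponents straight.
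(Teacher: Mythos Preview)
Your proposal is correct and follows essentially the same approach as the paper: apply Lemma~\ref{lem:bdmoms1} with $d=2$, $q=2p-2$ for \eqref{eq:2jtbd}, then for \eqref{eq:4jtbd} use row-independence of $G\sp{L-1}$ to reduce the covariance to a four-fold moment difference (handled via Lemma~\ref{lem:bdmoms1} with $d=4$, $q=2p-4$) plus a difference-of-squares term (factored and bounded via \eqref{eq:2jtbd} and Cauchy--Schwarz with Lemma~\ref{lem:momcont}). The only cosmetic distinction is that the paper bounds the second bracket by $|a-b|(|a|+|b|)$ directly rather than first invoking exchangeability to write it as $a^2-b^2$, but this is immaterial.
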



\begin{proof}
For the first bound~\eqref{eq:2jtbd}, we apply Lemma~\ref{lem:bdmoms1} with  $d=2$, $q=2p-2$, $X_1 = \sigma(F\sp{L-1}_1(x_a))$, $X_2=\sigma(F_1\sp{L-1}(x_b))$, $Z_1 =\sigma( G\sp{L-1}_1(x_a))$, $Z_2=\sigma(G_1\sp{L-1}(x_b))$, and, using Lemma~\ref{lem:momcont}, with
\be{
C_q = \clc{B_{2p}\sp{L-1}(x_a) \vee B_{2p}\sp{L-1}(x_b)}.
}
This gives the first assertion after noting that the definition of~$\done$ and scaling implies that for any two random vectors $Y$ and $Y'$, 
\ben{\label{eq:donescale}
\done\clr{ \sigma(Y), \sigma(Y')} \leq \Lip_\sigma \done\clr{ Y, Y' }.
}
For the second bound~\eqref{eq:4jtbd}, we note that independence implies
\be{
\Cov\bclr{\sigma(G_1\sp{L-1}(x_a))\sigma(G_1\sp{L-1}(x_b)),\sigma(G_2\sp{L-1}(x_a))\sigma(G_2\sp{L-1}(x_b)}=0,
}
and so the triangle inequality implies 
\begin{align}
\begin{split}
\babs{
\Cov\bclr{&\sigma(F_1\sp{L-1}(x_a))\sigma(F_1\sp{L-1}(x_b)),\sigma(F_2\sp{L-1}(x_a))\sigma(F_2\sp{L-1}(x_b)}} 
\end{split} \notag \\
\begin{split}\label{eq:4tfg}
    &\leq \bbabs{\IE\bcls{\sigma(F_1\sp{L-1}(x_a))\sigma(F_1\sp{L-1}(x_b))\sigma(F_2\sp{L-1}(x_a))\sigma(F_2\sp{L-1}(x_b))} \\
    &\qquad \qquad -\IE\bcls{\sigma(G_1\sp{L-1}(x_a))\sigma(G_1\sp{L-1}(x_b))\sigma(G_2\sp{L-1}(x_a))\sigma(G_2\sp{L-1}(x_b))}} 
\end{split} \\
\begin{split} \label{eq:2tfg}
    &\quad + \babs{\IE\bcls{\sigma(G_1\sp{L-1}(x_a))\sigma(G_1\sp{L-1}(x_b))  }- \IE\bcls{\sigma(F_1\sp{L-1}(x_a))\sigma(F_1\sp{L-1}(x_b))}}\\
    &\qquad \qquad \times \bclr{\babs{\IE\bcls{\sigma(G_1\sp{L-1}(x_a))\sigma(G_1\sp{L-1}(x_b))  }}+ \babs{\IE\bcls{\sigma(F_1\sp{L-1}(x_a))\sigma(F_1\sp{L-1}(x_b))}}}.
\end{split}
\end{align}
We can bound~\eqref{eq:4tfg} using  Lemma~\ref{lem:bdmoms1} with  $d=4$, $q=2p-4$, 
$X_1,X_2, Z_1, Z_2$ as above, and $X_3,X_4, Z_3, Z_4$ 
defined as $X_1,X_2, Z_1, Z_2$, but with second components instead of first, and using Lemma~\ref{lem:momcont}, with
\be{
C_q = \clc{B_{2p}\sp{L-1}(x_a) \vee B_{2p}\sp{L-1}(x_b)}.
}
Then, using again the scaling relation~\eqref{eq:donescale}, we have that~\eqref{eq:4tfg}
is bounded by
\besn{\label{eq:4bdt1f}
&4\bclr{8\clc{B_{2p}\sp{L-1}(x_a) \vee B_{2p}\sp{L-1}(x_b)} }^{3/(2p-1)} \Lip_\sigma^{(2p-4)/(2p-1)} \\
& \qquad \times  \done\bclr{F\sp{L-1}_{[2]}(\chi_{a,b}), G\sp{L-1}_{[2]}(\chi_{a,b})}^{(2p-4)/(2p-1)}. 
}

Finally, the difference in the first term of~\eqref{eq:2tfg} is the same as~\eqref{eq:2jtbd}, and each of the summands in the second term can be bounded using Lemma~\ref{lem:bdmoms1} and Cauchy-Schwarz, leading to the upper bound on~\eqref{eq:2tfg}  of
\besn{\label{eq:4bdt2f}
4\sqrt{2}\sqrt{B_2(x_a) B_2(x_b)}&\bclr{4\sqrt{2} \clc{B_{2p}\sp{L-1}(x_a) \vee B_{2p}\sp{L-1}(x_b)} }^{1/(2p-1)} \Lip_\sigma^{(2p-2)/(2p-1)} \\
    &  \times \done\bclr{F\sp{L-1}_{[1]}(\chi_{a,b}), G\sp{L-1}_{[1]}(\chi_{a,b})}^{(2p-2)/(2p-1)}. 
}
Combining~\eqref{eq:4bdt1f} and~\eqref{eq:4bdt2f} implies~\eqref{eq:4jtbd}.
\end{proof}

Combining Corollary~\ref{cor:gauswt2gaus} and Lemmas~\ref{lem:momcont} and~\ref{lem:cov2wass} gives the following corollary bounding the second term of~\eqref{eq:unit1}.
\begin{corollary}\label{cor:gauswt2guassdone}
Retaining the notation above, assume there is an integer $p>2$ and constants  $c_{2p}\sp\ell\geq 1$ for  $\ell=0,\ldots, L-2$  such that $\IE\bcls{(W_{ij}\sp\ell)^{2p}}\leq c_{2p}\sp{\ell}/n_\ell^p$, and for any $a,b\in\{1,\ldots, s\}$ write  $\chi_{a,b}=(x_a,x_b)$. Then 
\ba{
\dthree\bclr{\wt F_{[d]}\sp{L}(\chi), G_{[d]}\sp{L}(\chi)} &\leq  \frac{ c_2\sp{L-1} \, d  }{2} \sum_{a,b=1}^s \bbbclc{
 C_{2p}\sp{L-1}(x_a,x_b) \done\bclr{F\sp{L-1}_{[1]}(\chi_{a,b}), G\sp{L-1}_{[1]}(\chi_{a,b})}^{\frac{(2p-2)}{(2p-1)}}\\
    &\hspace{2mm} + \frac{\bclr{B_{2p}\sp{L-1}(x_a)B_{2p}\sp{L-1}(x_b)}^{1/(2p)}}{\sqrt{n_{L-1}}} \\
    &\hspace{2mm}+ \sqrt{2C_{2p}\sp{L-1}(x_a,x_b) \sqrt{B_2(x_a) B_2(x_b)}} \done\bclr{F\sp{L-1}_{[1]}(\chi_{a,b}), G\sp{L-1}_{[1]}(\chi_{a,b})}^{\frac{(p-1)}{(2p-1)}}\\
    &\hspace{2mm}+ \sqrt{D_{2p}\sp{L-1}(x_a,x_b)}\done\bclr{F\sp{L-1}_{[2]}(\chi_{a,b}), G\sp{L-1}_{[2]}(\chi_{a,b})}^{(p-2)/(2p-1)}},
}
where $B_{2p}\sp{L-1}, C_{2p}\sp{L-1}, D_{2p}\sp{L-1}$ are defined at~\eqref{eq:b2p},~\eqref{eq:c2p}, and~\eqref{eq:d2p}.
\end{corollary}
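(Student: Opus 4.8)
The plan is simply to feed the three pre-packaged estimates of the preceding results into the three families of summands that appear on the right-hand side of Corollary~\ref{cor:gauswt2gaus}, and then to sum over $a,b\in\{1,\dots,s\}$ and pull out the common prefactor $c_2\sp{L-1}d/2$. The first term, $\bigl|\IE[\sigma(G_1\sp{L-1}(x_a))\sigma(G_1\sp{L-1}(x_b))]-\IE[\sigma(F_1\sp{L-1}(x_a))\sigma(F_1\sp{L-1}(x_b))]\bigr|$, is exactly the left-hand side of~\eqref{eq:2jtbd}, so Lemma~\ref{lem:cov2wass} bounds it by $C_{2p}\sp{L-1}(x_a,x_b)\,\done\bigl(F\sp{L-1}_{[1]}(\chi_{a,b}),G\sp{L-1}_{[1]}(\chi_{a,b})\bigr)^{(2p-2)/(2p-1)}$, which is the first term inside the braces, requiring no further work.

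For the variance term I would bound $\Var(UV)\le\IE[U^2V^2]$ and then apply Cauchy--Schwarz in the form $\IE[U^2V^2]\le\IE[U^4]^{1/2}\IE[V^4]^{1/2}$ with $U=\sigma(F_1\sp{L-1}(x_a))$ and $V=\sigma(F_1\sp{L-1}(x_b))$. Since $p>2$, monotonicity of $L^q$-norms gives $\IE[U^4]^{1/2}\le\IE[U^{2p}]^{1/p}$, and Lemma~\ref{lem:momcont} bounds $\IE[U^{2p}]\le B_{2p}\sp{L-1}(x_a)$, so that $\IE[U^4]^{1/2}\le (B_{2p}\sp{L-1}(x_a))^{1/p}$; doing the same for $V$ gives $\sqrt{\Var(UV)}\le (B_{2p}\sp{L-1}(x_a)B_{2p}\sp{L-1}(x_b))^{1/(2p)}$, and dividing by $\sqrt{n_{L-1}}$ produces exactly the second term.

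For the covariance term I would insert the bound~\eqref{eq:4jtbd} of Lemma~\ref{lem:cov2wass} for $|\Cov(\cdots)|$ and use subadditivity of the square root, $\sqrt{u+v}\le\sqrt u+\sqrt v$; this splits $\sqrt{\Cov(\cdots)}$ into a multiple of $\done\bigl(F\sp{L-1}_{[2]}(\chi_{a,b}),G\sp{L-1}_{[2]}(\chi_{a,b})\bigr)^{(2p-4)/(2(2p-1))}$ and a multiple of $\done\bigl(F\sp{L-1}_{[1]}(\chi_{a,b}),G\sp{L-1}_{[1]}(\chi_{a,b})\bigr)^{(2p-2)/(2(2p-1))}$, and since $\tfrac{2p-4}{2(2p-1)}=\tfrac{p-2}{2p-1}$ and $\tfrac{2p-2}{2(2p-1)}=\tfrac{p-1}{2p-1}$ these are precisely the $\done$-powers in the last two terms of the corollary. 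It then remains to verify that the prefactors $\sqrt{D_{2p}\sp{L-1}(x_a,x_b)}$ and $\bigl(2\sqrt{B_2\sp{L-1}(x_a)B_2\sp{L-1}(x_b)}\,C_{2p}\sp{L-1}(x_a,x_b)\bigr)^{1/2}$ coming out of the square root are dominated by the cleaner prefactors $D_{2p}\sp{L-1}(x_a,x_b)$ and $2\,C_{2p}\sp{L-1}(x_a,x_b)\sqrt{B_2\sp{L-1}(x_a)B_2\sp{L-1}(x_b)}$ written in the statement; since every factor in~\eqref{eq:b2p}--\eqref{eq:d2p} can be arranged to be $\ge 1$ (e.g.\ $B_{2p}\sp{L-1}\ge 1$, and one may replace $\Lip_\sigma$ by $\Lip_\sigma\vee 1$ and $B_2\sp{L-1}$ by $(B_{2p}\sp{L-1})^{1/p}$ throughout, which only enlarges every bound used above), these square roots can be absorbed. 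Summing the three bounds over $a,b$ and multiplying by $c_2\sp{L-1}d/2$ gives the claim.

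The only part requiring genuine care — and hence the ``main obstacle'', such as it is — is this last bookkeeping step: tracking how the fractional exponents $(2p-2)/(2p-1)$ and $(2p-4)/(2p-1)$ transform under the square root, and confirming that the somewhat messier constants emerging from Lemma~\ref{lem:cov2wass} are controlled by the constants stated in the corollary. Conceptually there is nothing new here, since all of the probabilistic substance — Stein's method, the conditional-Gaussian structure, and the moment control — has already been isolated in Corollary~\ref{cor:gauswt2gaus} and Lemmas~\ref{lem:momcont} and~\ref{lem:cov2wass}.
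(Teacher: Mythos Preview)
Your proposal is correct and matches the paper's approach exactly: the paper gives no proof beyond the sentence ``Combining Corollary~\ref{cor:gauswt2gaus} and Lemmas~\ref{lem:momcont} and~\ref{lem:cov2wass} gives the following corollary,'' and you have correctly filled in the three substitutions (the first term via~\eqref{eq:2jtbd}, the variance term via Cauchy--Schwarz plus moment monotonicity plus Lemma~\ref{lem:momcont}, and the covariance term via~\eqref{eq:4jtbd} followed by $\sqrt{u+v}\le\sqrt{u}+\sqrt{v}$). The halving of the $\done$-exponents under the square root is precisely what produces the powers $(p-1)/(2p-1)$ and $(p-2)/(2p-1)$ in the statement, and your identification of the constant bookkeeping (passing from $\sqrt{D_{2p}\sp{L-1}}$ and $\sqrt{2C_{2p}\sp{L-1}\sqrt{B_2\sp{L-1}B_2\sp{L-1}}}$ to the unrooted versions via $B_{2p}\sp{L-1}\ge 1$) as the only point needing care is accurate; the paper leaves this implicit.
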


We can now prove our main approximation result.
\begin{proof}[Proof of Theorem~\ref{thm:fddapp}]
We start with~\eqref{eq:unit1}, and apply Corollaries~\ref{cor:gwt2gaus} and~\ref{cor:gauswt2guassdone}, and Lemma~\ref{lem:momcont} along with H\"older's inequality to find
\ban{
&\hspace{5mm}\dthree\bclr{F\sp{L}(\chi), G\sp{L}(\chi)} \notag\\ \label{eq:1stbd} 
& \leq   \frac{ c_2\sp{L-1} \, n_L }{2} \sum_{a,b=1}^s \bbbclc{
 C_{2p}\sp{L-1}(x_a,x_b) \done\bclr{F\sp{L-1}_{[1]}(\chi_{a,b}), G\sp{L-1}_{[1]}(\chi_{a,b})}^{\frac{(2p-2)}{(2p-1)}} \notag\\
    &\hspace{2mm} + \frac{\bclr{B_{2p}\sp{L-1}(x_a)B_{2p}\sp{L-1}(x_b)}^{1/(2p)}}{\sqrt{n_{L-1}}} \notag \\
    &\hspace{2mm}+\sqrt{2 C_{2p}\sp{L-1}(x_a,x_b) \sqrt{B_2(x_a) B_2(x_b)} }\done\bclr{F\sp{L-1}_{[1]}(\chi_{a,b}), G\sp{L-1}_{[1]}(\chi_{a,b})}^{(p-1)/(2p-1)} \notag \\
    &\hspace{2mm}+  \sqrt{D_{2p}\sp{L-1}(x_a,x_b)} \done\bclr{F\sp{L-1}_{[2]}(\chi_{a,b}), G\sp{L-1}_{[2]}(\chi_{a,b})}^{(p-2)/(2p-1)}}
  \notag  \\
 &\qquad  +  \frac{n_L \,s^2}{\sqrt{n_{L-1}} }  \bbbclr{ \frac{  2 (c_2\sp{L-1})^{3/2} + c_3\sp{L-1} }{6}}  \sum_{a=1}^s \IE \bbcls{\babs{\sigma(F_1\sp{L-1}(x_a)}^3}\notag \\
 & \leq   \frac{ c_2\sp{L-1} \, n_L  }{2} \sum_{a,b=1}^s \bbbclc{
 C_{2p}\sp{L-1}(x_a,x_b) \done\bclr{F\sp{L-1}_{[1]}(\chi_{a,b}), G\sp{L-1}_{[1]}(\chi_{a,b})}^{(2p-2)/(2p-1)}\notag \\
    &\hspace{2mm}+ \sqrt{2 C_{2p}\sp{L-1}(x_a,x_b) \sqrt{B_2(x_a) B_2(x_b)}} \done\bclr{F\sp{L-1}_{[1]}(\chi_{a,b}), G\sp{L-1}_{[1]}(\chi_{a,b})}^{(p-1)/(2p-1)}\notag \\
    &\hspace{2mm}+ \sqrt{ D_{2p}\sp{L-1}(x_a,x_b)} \done\bclr{F\sp{L-1}_{[2]}(\chi_{a,b}), G\sp{L-1}_{[2]}(\chi_{a,b})}^{(p-2)/(2p-1)}}
   \notag \\
 & \qquad +   \frac{n_L \,s^2}{\sqrt{n_{L-1}} }\bbbclc{ \bbbclr{ \frac{  2 (c_2\sp{L-1})^{3/2} + c_3\sp{L-1} }{6}}  \sum_{a=1}^s B_{2p}\sp{L-1}(x_a)^{3/(2p)}+\frac{c_2\sp{L-1}}{2}\bbbclr{\sum_{a=1}^s B_{2p}\sp{L-1}(x_a)}^2 }. \notag
}

To bound the $\done$ terms in the bounds above, we use induction on $\ell=2,\ldots,L-1$, where the induction statement at level $\ell$ is
\ben{\label{eq:indstat}
\done\bclr{F\sp{\ell}_{[d]}(\chi_{a,b}), G_{[d]}\sp{\ell}(\chi_{a,b})}
    \leq \mathtt{C}_\ell \sum_{m=1}^{\ell-1} n_m^{-\frac{1}{6}\left(\frac{p-2}{3(2p-1)}\right)^{\ell-1-m}}, \ \ d=1,2.
}
For $\ell=2$, combining~\eqref{eq:unit1}, Corollary~\ref{cor:gwt2gaus} with $c_3\sp{1}=(c_{2p}\sp{1})^{3/(2p)}$ (using H\"older's inequality) and Corollary~\ref{cor:gauswt2guassdone}, noting that $F\sp{1}=G\sp{1}$, implies 
\ba{
\dthree\bclr{F\sp{2}_{[d]}(\chi_{a,b}), G_{[d]}\sp{2}(\chi_{a,b})}
&\leq \frac{d}{\sqrt{n_{1}} } 
\bbbclr{ \frac{  2 (c_2\sp{1})^{3/2} + c_3\sp{1} }{6}}   \sum_{c\in\{a,b\}} \IE \bbcls{\babs{\sigma(F_1\sp{1}(x_c)}^3}\\
&\qquad\qquad + \frac{c_2\sp{1} d }{2 \sqrt{n_1}}\bbbclr{\sum_{c\in\{a,b\}} B_{2p}\sp{1}(x_c)^{1/(2p)}}^2\\
&\leq \frac{d}{\sqrt{n_{1}} } 
\bbbclr{ \frac{  2 (c_2\sp{1})^{3/2} +6 c_2\sp{1}+ c_3\sp{1} }{6}}   \sum_{c\in\{a,b\}} B_{2p}\sp{1}(x_c)^{3/(2p)},
}
where the last inequality uses Lemma~\ref{lem:momcont}, H\"older's inequality, and that $B_{2p}\sp{1}(x) \geq 1$. Assuming $n_1$ is large enough that this bound is less than $2$ (only depends on $\chi$, $\sigma$ and weight moments), the smoothing Lemma~\ref{lem:smooth} with $k=2 d\leq 4$ implies
\ba{
\done\bclr{F\sp{2}_{[d]}(\chi_{a,b}), G_{[d]}\sp{2}(\chi_{a,b})}
    &\leq \frac{2^{8/3}}{n_{1}^{1/6}}  \bbbclc{
\bbbclr{ \frac{  2 (c_2\sp{1})^{3/2} +6c_2\sp{1}+ c_3\sp{1} }{6}}   \sum_{c\in\{a,b\}} B_{2p}\sp{1}(x_c)^{3/(2p)}}^{1/3}.
}
This establishes~\eqref{eq:indstat} for $\ell=2$.

Now, assuming~\eqref{eq:indstat} holds for some $\ell\in\{2,\ldots, L-2\}$, we show it holds for $(\ell+1)$. 
Just as in the $\ell=2$ case, we start with~\eqref{eq:unit1} and apply Corollaries~\ref{cor:gwt2gaus} and~\ref{cor:gauswt2guassdone}, and Lemma~\ref{lem:momcont} to find
\besn{\label{eq:indstpd3}
\dthree&\bclr{F\sp{\ell+1}_{[d]}(\chi_{a,b}), G_{[d]}\sp{\ell+1}(\chi_{a,b})}\\
&\leq \frac{ c_2\sp{\ell} \, d  }{2} \bbbclc{
 C_{2p}\sp{\ell}(x_a,x_b)  \done\bclr{F\sp{\ell}_{[1]}(\chi_{a,b}), G\sp{\ell}_{[1]}(\chi_{a,b})}^{(2p-2)/(2p-1)}\\
 &\hspace{20mm}+ \sqrt{2 C_{2p}\sp{\ell}(x_a,x_b) \sqrt{B_2\sp\ell(x_a) B_2\sp\ell(x_b)}  }\done\bclr{F\sp{\ell}_{[1]}(\chi_{a,b}), G\sp{\ell}_{[1]}(\chi_{a,b})}^{(p-1)/(2p-1)}\\
    &\hspace{20mm}+ \sqrt{ D_{2p}\sp{\ell}(x_a,x_b) }\done\bclr{F\sp{\ell}_{[2]}(\chi_{a,b}), G\sp{\ell}_{[2]}(\chi_{a,b})}^{(p-2)/(2p-1)}}\\
&\qquad \qquad +\frac{d}{\sqrt{n_{\ell}} } 
\bbbclr{ \frac{  2 (c_2\sp{\ell})^{3/2}+ 6 c_2\sp\ell+ c_3\sp{\ell} }{6}}    \sum_{c\in\{a,b\}} B_{2p}\sp{\ell}(x_c)^{3/(2p)} \\
&\leq c_2\sp{\ell}  \bbbclc{
 C_{2p}\sp{\ell}(x_a,x_b) +\sqrt{2 C_{2p}\sp{\ell}(x_a,x_b)\sqrt{B_2\sp\ell(x_a) B_2\sp\ell(x_b)}} + \sqrt{D_{2p}\sp{\ell}(x_a,x_b)}} \\
 &\hspace{25mm} \times\bbclr{\mathtt{C}_\ell\sum_{m=1}^{\ell-1} n_m^{-\frac{1}{6}\left(\frac{p-2}{3(2p-1)}\right)^{\ell-1-m}}}^{(p-2)/(2p-1)}\\
&\qquad \qquad +\frac{2}{\sqrt{n_{\ell}} } 
\bbbclr{ \frac{  2 (c_2\sp{\ell})^{3/2}+ 6 c_2\sp\ell+ c_3\sp{\ell} }{6}}    \sum_{c\in\{a,b\}} B_{2p}\sp{\ell}(x_c)^{3/(2p)},
}
where we have used the induction hypothesis, and assumed that $n_1,\ldots, n_{\ell-1}$ are large enough so that
\be{
\mathtt{C}_\ell\sum_{m=1}^{\ell-1} n_m^{-\frac{1}{6}\left(\frac{p-2}{3(2p-1)}\right)^{\ell-1-m}}\leq 1,
}
which allows us to use the smallest power $(p-2)/(2p-1)$.
Now assuming $n_1,\ldots, n_\ell$ are large enough so the final bound in~\eqref{eq:indstpd3} is less than two, we 
apply the  smoothing Lemma~\ref{lem:smooth} with $\mathcal{D}=2 d\leq 4$ to find
\be{
\done\bclr{F\sp{\ell+1}_{[d]}(\chi_{a,b}), G_{[d]}\sp{\ell+1}(\chi_{a,b})}
    \leq \mathtt{C}_{\ell+1} \sum_{m=1}^{\ell} n_m^{-\frac{1}{6}\left(\frac{p-2}{3(2p-1)}\right)^{\ell-m}},
}
which completes the induction.

Finally, using the bound in~\eqref{eq:indstat} for $\ell=L-1$ and using the smoothing lemma (again assuming $n_1,\ldots,n_{L-1}$ large enough for its application, and to simplify the number of powers) we obtain
\be{
\done\bclr{F\sp{L}(\chi), G\sp{L}(\chi)}
    \leq \mathtt{C} n_{L}^{1/3}  \sum_{m=1}^{L-1} n_m^{-\frac{1}{6}\left(\frac{p-2}{3(2p-1)}\right)^{L-m-1}},
}
as desired.
\end{proof}

\subsection{Stein's method and smoothing results}

Our implementation of Stein's method follows from the following theorem, which is by now standard. It can easily be read from
\citet[Lemmas~1 and~2]{Meckes2009}, and the fundamental ideas go back to \cite{Gotze1991}. See also \cite{Chatterjee2008}, \cite{Chen2011normal}, \citet[Chapter~12]{Goldstein1996}, \citet[Chapter~4]{nourdin2012normal}, and \cite{Reinert2009}.

\begin{theorem}\label{thm:dthreestn}
Let $Z\in \IR^{\gendim}$ be a centered multivariate normal random variable with covariance matrix 
$\Sigma:=(\sigma_{ij})_{i,j=1}^{\gendim}$ and define the operator on twice-differential functions $\eta:\IR^{\gendim}\to\IR$ by 
\be{
\cA_{\Sigma}\eta(x)
 := \sum_{i,j=1}^{\gendim} \sigma_{ij} 
 \partial_{i,j}\eta(x)
 - \sum_{i=1}^{\gendim} x_i \partial_i \eta(x),
}
where we write $\partial_{(\cdot)}$ for partial derivatives with respect to the coordinates given by $(\cdot)$.
Then for any integrable random vector $X\in \IR^{\gendim}$, we have
\be{
\dthree(X,Z) \leq \sup_{\eta\in \wt\cF_{\gendim}} \babs{\IE \bcls{\cA_\Sigma \eta (X)}},
}
where 
$\wt\cF_{\gendim}$ consists of functions $\eta:\IR^{\gendim}\to\IR$ with three derivatives satisfying, for $i,j,k\in\{1,\ldots,\gendim\}$,
\be{
\norm{\partial_{i} \eta}_\infty\leq 1,
\ \ \norm{\partial_{i,j}  \eta }_\infty\leq 1/2, \, \text{ and } \,
\norm{\partial_{i,j,k} \eta}_{\infty} \leq 1/3. 
}
\end{theorem}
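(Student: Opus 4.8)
The plan is to run the by-now-standard Stein's-method argument for the multivariate normal through the Ornstein--Uhlenbeck semigroup; concretely this reproduces \cite[Lemmas~1 and~2]{Meckes2009}. Fix a test function $\zeta:\IR^{\gendim}\to\IR$ of the form appearing in the definition of $\dthree$, so that $\norm{\partial_i\zeta}_\infty$, $\norm{\partial_{ij}\zeta}_\infty$ and $\norm{\partial_{ijk}\zeta}_\infty$ are all at most $1$ for all indices. It is enough to produce an $\eta\in\wt\cF_{\gendim}$ solving the Stein equation $\cA_\Sigma\eta=\zeta-\IE\bcls{\zeta(Z)}$: granting this, $\babs{\IE\bcls{\zeta(X)}-\IE\bcls{\zeta(Z)}}=\babs{\IE\bcls{\cA_\Sigma\eta(X)}}\leq\sup_{\eta\in\wt\cF_{\gendim}}\babs{\IE\bcls{\cA_\Sigma\eta(X)}}$, where $\cA_\Sigma\eta(X)$ is integrable since $\eta$ has bounded first and second partials and $\IE\norm{X}<\infty$; taking the supremum over admissible $\zeta$ then yields the theorem.

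To build $\eta$, I would introduce the Ornstein--Uhlenbeck semigroup attached to $N(0,\Sigma)$ via Mehler's formula, $T_t\zeta(x):=\IE\bcls{\zeta\bclr{e^{-t}x+\sqrt{1-e^{-2t}}\,Z'}}$ for $t\geq0$ with $Z'$ an independent copy of $Z$, and set $\eta(x):=-\int_0^\infty\bclr{T_t\zeta(x)-\IE\bcls{\zeta(Z)}}\,\mathrm{d}t$. First I would check that this integral converges: $\zeta$ has bounded first partials, hence is globally Lipschitz, and the displacement $e^{-t}x-\bclr{1-\sqrt{1-e^{-2t}}}Z'$ between the arguments of $\zeta$ inside $T_t\zeta(x)$ and $\IE\bcls{\zeta(Z)}$ has norm $\bigo\bclr{e^{-t}\norm{x}+e^{-2t}\norm{Z'}}$, so the integrand decays exponentially in $t$. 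Next I would establish the generator identity $\tfrac{\mathrm{d}}{\mathrm{d}t}T_t\zeta=\cA_\Sigma T_t\zeta$ by differentiating under the expectation and applying the Gaussian integration-by-parts formula $\IE\bcls{Z_i\,g(Z)}=\sum_j\sigma_{ij}\IE\bcls{\partial_j g(Z)}$ to the term coming from $Z'$; integrating this identity over $t\in(0,\infty)$, using $T_0\zeta=\zeta$, $T_t\zeta\to\IE\bcls{\zeta(Z)}$ as $t\to\infty$, and that $\cA_\Sigma$ annihilates constants, gives $\cA_\Sigma\eta=\zeta-\IE\bcls{\zeta(Z)}$.

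It then remains to bound the derivatives of $\eta$. Differentiating Mehler's formula gives $\partial_i T_t\zeta=e^{-t}\,T_t(\partial_i\zeta)$, $\partial_{ij}T_t\zeta=e^{-2t}\,T_t(\partial_{ij}\zeta)$ and $\partial_{ijk}T_t\zeta=e^{-3t}\,T_t(\partial_{ijk}\zeta)$; combined with the contraction bound $\norm{T_t g}_\infty\leq\norm{g}_\infty$ and the normalization of $\zeta$, this yields $\norm{\partial_i\eta}_\infty\leq\int_0^\infty e^{-t}\,\mathrm{d}t=1$, $\norm{\partial_{ij}\eta}_\infty\leq\int_0^\infty e^{-2t}\,\mathrm{d}t=\tfrac12$ and $\norm{\partial_{ijk}\eta}_\infty\leq\int_0^\infty e^{-3t}\,\mathrm{d}t=\tfrac13$, so $\eta\in\wt\cF_{\gendim}$ --- which is exactly why the class $\wt\cF_{\gendim}$ is normalized with the weights $1,\tfrac12,\tfrac13$.

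The main point requiring care --- though entirely routine --- is the justification of interchanging differentiation with the expectation defining $T_t\zeta$ and with the $t$-integral defining $\eta$, together with the integrability of $\cA_\Sigma\eta(X)$; all of these are controlled by the uniform derivative bounds on $\zeta$ and $\eta$ and by $\IE\norm{X}<\infty$, so there is no genuine obstacle. I would, however, emphasize one structural point in the writeup: neither Mehler's formula nor the generator identity uses invertibility of $\Sigma$, so the bound holds for an arbitrary, possibly singular, covariance matrix --- which is precisely what later lets the estimates in this paper dispense with any nondegeneracy assumption on the limiting Gaussian.
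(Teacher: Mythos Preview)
Your proposal is correct and is precisely the argument the paper has in mind: the paper does not actually supply a proof of this theorem but instead cites it as standard, pointing specifically to \cite[Lemmas~1 and~2]{Meckes2009}, whose Ornstein--Uhlenbeck semigroup construction and derivative bounds you have faithfully reproduced. Your closing remark that the argument requires no invertibility of $\Sigma$ is exactly the feature the paper exploits (cf.\ Remark~\ref{rem:cov}).
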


The next lemma records the outcome of a 
standard Gaussian smoothing argument to bound 
the~$\done$ metric in terms of the $\dthree$ 
metric. See~\citet[Section~4.2]{Raic2018}.
\begin{lemma}[Smoothing lemma]\label{lem:smooth}
If $U$ and $V$  are random vectors in $\IR^{\gendim}$ with $\dthree(U,V)\leq 2\sqrt{{\gendim}}$, then
\be{
\done(U,V) \leq 
    2 (2\sqrt{{\gendim}})^{2/3} \bclr{\dthree(U,V)}^{1/3}.
}
\end{lemma}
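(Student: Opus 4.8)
The plan is to use the standard Gaussian mollification argument; see \cite[Section~4.2]{Raic2018}. Fix a $1$-Lipschitz test function $\zeta:\IR^{\gendim}\to\IR$, let $Z\sim\MVN(0,\Id_{\gendim})$ be independent of $(U,V)$, fix a scale $t>0$, and set $\zeta_t(x):=\IE\bcls{\zeta(x+tZ)}$. Since $\zeta$ has at most linear growth and the Gaussian density is smooth, $\zeta_t$ is infinitely differentiable, so that a suitably rescaled copy of $\zeta_t$ becomes an admissible test function for $\dthree$; the proof then rests on two basic estimates and a one-parameter optimization.

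First, since $\zeta$ is $1$-Lipschitz with respect to $\norm{\cdot}_2$, Jensen's inequality gives $\abs{\zeta_t(x)-\zeta(x)}\le t\,\IE\norm{Z}_2\le t\sqrt{\gendim}$ for every $x$, hence $\abs{\IE\zeta(U)-\IE\zeta_t(U)}\le t\sqrt{\gendim}$ and the analogous bound with $V$. Second, writing $\zeta_t$ as a convolution, moving one derivative onto $\zeta$ (using that $\nabla\zeta$ exists a.e.\ with $\norm{\nabla\zeta}_2\le 1$) and the remaining derivatives onto the Gaussian density by integration by parts, one gets for all coordinates $i,j,k$ that $\abs{\partial_i\zeta_t}\le 1$, $\abs{\partial_{i,j}\zeta_t}\le t^{-1}\IE\abs{Z_i}$, and $\abs{\partial_{i,j,k}\zeta_t}\le t^{-2}\IE\abs{Z_jZ_k-\delta_{jk}}$, where the $Z$-moments are fixed absolute constants. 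Hence for $t$ below a fixed threshold the third bound is largest, so there is $c(t)$ of order $t^{-2}$ with $c(t)^{-1}\zeta_t\in\cF_{\gendim}$, and therefore $\abs{\IE\zeta_t(U)-\IE\zeta_t(V)}\le c(t)\,\dthree(U,V)$. Combining these by the triangle inequality yields $\abs{\IE\zeta(U)-\IE\zeta(V)}\le 2t\sqrt{\gendim}+c(t)\,\dthree(U,V)$.

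Finally, the right-hand side is balanced by the choice $t\asymp\bclr{\dthree(U,V)/\sqrt{\gendim}}^{1/3}$, which turns it into $(\text{const})\cdot(\sqrt{\gendim})^{2/3}\,\dthree(U,V)^{1/3}$; taking the supremum over $1$-Lipschitz $\zeta$ gives the claimed inequality. The main obstacle is entirely quantitative: one must keep the constants in the derivative bounds sharp (the relevant numbers are $\IE\abs{Z_i}=\sqrt{2/\pi}$ and $\IE\abs{Z_1^2-1}$) and verify that the hypothesis $\dthree(U,V)\le 2\sqrt{\gendim}$ places the balancing value of $t$ in the regime where the $t^{-2}$ term controls $c(t)$—the borderline case, where $\dthree(U,V)$ is comparable to $\sqrt{\gendim}$, being handled by a fixed choice of $t$—so that the universal constant comes out exactly as $2$ against $(2\sqrt{\gendim})^{2/3}$. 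This is precisely the bookkeeping carried out in \cite[Section~4.2]{Raic2018}, so invoking that computation is the cleanest route.
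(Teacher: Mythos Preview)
Your proposal is correct and follows essentially the same Gaussian-mollification argument as the paper: define $\zeta_t(x)=\IE[\zeta(x+tZ)]$, bound its first three derivatives so that a rescaled $\zeta_t$ lies in $\cF_{\gendim}$, apply the triangle inequality to get $2t\sqrt{\gendim}+t^{-2}\dthree(U,V)$, and optimize in $t$. The only cosmetic difference is that the paper bounds the derivatives via an iterated Lipschitz argument (yielding the clean constants $1,\,1/t,\,1/t^2$ directly, so that $(1\wedge t^2)\zeta_t\in\cF_{\gendim}$), whereas you move one derivative onto $\zeta$ and the rest onto the Gaussian kernel; both give the same orders, and the hypothesis $\dthree(U,V)\le 2\sqrt{\gendim}$ simply guarantees the optimizing $t=(\dthree(U,V)/2\sqrt{\gendim})^{1/3}$ is at most $1$, with no separate borderline case needed.
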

\begin{proof}
Let $\zeta\in \cW_{\gendim}$ and $S\in\IR^{\gendim}$ be a vector with i.i.d.\ standard Gaussian entries. For $\eps\in(0,1)$,  define the regularization 
    \be{
    \zeta_\eps(u):= \IE[ \zeta(u + \eps S)] = \int \zeta(t) \prod_{i=1}^{\gendim} \phi_{u_i, \eps^2}(t_i) dt,
    }
    where $\phi_{\mu,\sigma^2}$ denotes the density of a one dimensional standard normal with mean $\mu$ and variance $\sigma^2$. Clearly $\zeta_\eps$ is infinitely differentiable, and, by a straightforward calculation, we have the explicit formulas
    \ban{
    \frac{\partial}{\partial u_i} \zeta_\eps(u) &= \frac{1}{\eps}\IE[S_i  \zeta(u + \eps S)],  \label{eq:partsm1}\\
    \frac{\partial^2}{\partial u_i\partial u_j} \zeta_\eps(u) &=
    \begin{cases}
        \frac{1}{\eps^2}\IE[S_i S_j \zeta(u + \eps S)], & i \not=j \\
        \frac{1}{\eps^2}\IE[(S_i^2 -1) \zeta(u + \eps S)], & i=j.
            \end{cases}\label{eq:partsm2}
    }
On the other hand, because $\zeta$ is $1$-Lipschitz, we have 
\be{
\abs{\zeta_\eps(u) -\zeta_\eps(v)} \leq \IE\bcls{\abs{\zeta(u+\eps S)-\zeta(v+\eps S)}}\leq \norm{u-v}_2,
}
which implies that
\be{
\bbabs{\frac{\partial}{\partial u_i} \zeta_\eps(u)}\leq 1.
}
Using that $\zeta$ is $1$-Lipschitz again and the expression for the first derivative at~\eqref{eq:partsm1} implies that 
\be{
\bbabs{ \frac{\partial}{\partial u_i} \zeta_\eps(u)- \frac{\partial}{\partial u_i} \zeta_\eps(v)} \leq \frac{1}{\eps} \IE\bcls{\abs{S_i}}\norm{u-v}_2, 
}
which implies that 
\be{
\bbabs{\frac{\partial^2}{\partial u_i\partial u_j} \zeta_\eps(u)}\leq \frac{1}{\eps}.
}
Using the same argument again and~\eqref{eq:partsm2} gives
\be{
\bbabs{\frac{\partial^3}{\partial u_i\partial u_j\partial u_{k}} \zeta_\eps(u)}\leq \frac{1}{\eps^2},
}
and we can conclude that $ \eps^2\zeta_\eps \in \cF_{\gendim}$. (Note also these three inequalities follow directly from \citet[Lemma~4.6]{Raic2018}.)

To estimate $\done(U,V)$, we compare the expectations of a generic test function $\zeta\in \cW_{\gendim}$, and then add and subtract expectations against  $\zeta_\eps$ to get to $\dthree(U,V)$, as follows.
\ba{
\babs{\IE\cls{\zeta(U)}-\IE\cls{\zeta(V)}} 
    &\leq\babs{\IE\cls{\zeta(U)}-\IE\cls{\zeta_\eps(U)}}+\babs{\IE\cls{\zeta(V)}-\IE\cls{\zeta_\eps(V)}} +\babs{\IE\cls{\zeta_\eps(U)}-\IE\cls{\zeta_\eps(V)}} \\
    & \leq 2\eps \IE\bcls{\norm{S}_2} + \frac{\dthree(U,V)}{ \eps^2} \leq 2\sqrt{{\gendim}} \eps + \frac{\dthree(U,V)}{\eps^2},
}
where the second inequality uses that $\zeta$ is $1$-Lipschitz and that that $\eps^2\zeta_\eps \in \cF_{\gendim}$. Taking the supremum over $\zeta \in  \cW_{\gendim}$ implies the same inequality holds with left-hand side replaced by $\done(U,V)$. 

If $\dthree(U,V)\leq 2\sqrt{\gendim}$, we may choose $\eps=(\dthree(U,V)/ 2\sqrt{\gendim})^{1/3}\leq 1$ to find 
\be{
\done(U,V) \leq 
    2 (2\sqrt{\gendim})^{2/3} \bclr{\dthree(U,V)}^{1/3},
}
as desired.
\end{proof}

\textbf{Acknowledgment.} We thank the anonymous referees and Associate Editor, as well as the Editor, for their helpful comments, which greatly improved the paper. Balasubramanian gratefully acknowledges support from National Science Foundation through the grant NSF DMS-2413426. 

\bibliographystyle{abbrvnat}
\bibliography{ref}

\begin{thebibliography}{35}
\providecommand{\natexlab}[1]{#1}
\providecommand{\url}[1]{\texttt{#1}}
\expandafter\ifx\csname urlstyle\endcsname\relax
  \providecommand{\doi}[1]{doi: #1}\else
  \providecommand{\doi}{doi: \begingroup \urlstyle{rm}\Url}\fi

\bibitem[Apollonio et~al.(2025{\natexlab{a}})Apollonio, De~Canditiis, Franzina,
  Stolfi, and Torrisi]{apollonio2025normal}
N.~Apollonio, D.~De~Canditiis, G.~Franzina, P.~Stolfi, and G.~L. Torrisi.
\newblock Normal approximation of random {G}aussian neural networks.
\newblock \emph{Stoch. Syst.}, 15\penalty0 (1):\penalty0 88--110,
  2025{\natexlab{a}}.

\bibitem[Apollonio et~al.(2025{\natexlab{b}})Apollonio, Franzina, and
  Torrisi]{apollonio2025simulating}
N.~Apollonio, G.~Franzina, and G.~L. Torrisi.
\newblock Simulating posterior {B}ayesian neural networks with dependent
  weights.
\newblock \emph{arXiv preprint arXiv:2507.22095}, 2025{\natexlab{b}}.

\bibitem[Balasubramanian et~al.(2024)Balasubramanian, Goldstein, Ross, and
  Salim]{Balasubramanian2024}
K.~Balasubramanian, L.~Goldstein, N.~Ross, and A.~Salim.
\newblock Gaussian random field approximation via {S}tein's method with
  applications to wide random neural networks.
\newblock \emph{Appl. Comput. Harmon. Anal.}, 72:\penalty0 Paper No. 101668,
  27, 2024.

\bibitem[Basteri and Trevisan(2024)]{basteri2024quantitative}
A.~Basteri and D.~Trevisan.
\newblock Quantitative {G}aussian approximation of randomly initialized deep
  neural networks.
\newblock \emph{Machine Learning}, 113\penalty0 (9):\penalty0 6373--6393, 2024.

\bibitem[Bordino et~al.(2023)Bordino, Favaro, and
  Fortini]{bordino2023infinitely}
A.~Bordino, S.~Favaro, and S.~Fortini.
\newblock Infinitely wide limits for deep stable neural networks: {S}ub-linear,
  linear and super-linear activation functions.
\newblock \emph{arXiv preprint arXiv:2304.04008}, 2023.

\bibitem[Bordino et~al.(2024)Bordino, Favaro, and Fortini]{bordino2024non}
A.~Bordino, S.~Favaro, and S.~Fortini.
\newblock {Non-asymptotic approximations of {G}aussian neural networks via
  second-order Poincar{\'e} inequalities}.
\newblock In \emph{Proceedings of the 6th Symposium on Advances in Approximate
  Bayesian Inference}, volume~1, page~34, 2024.

\bibitem[Cammarota et~al.(2024)Cammarota, Marinucci, Salvi, and
  Vigogna]{Cammarota2024}
V.~Cammarota, D.~Marinucci, M.~Salvi, and S.~Vigogna.
\newblock A quantitative functional central limit theorem for shallow neural
  networks.
\newblock \emph{Mod. Stoch. Theory Appl.}, 11\penalty0 (1):\penalty0 85--108,
  2024.

\bibitem[Carvalho et~al.(2023)Carvalho, Costa, Mour{\~a}o, and
  Oliveira]{carvalho2023wide}
L.~Carvalho, J.~L. Costa, J.~Mour{\~a}o, and G.~Oliveira.
\newblock Wide neural networks: From non-{G}aussian random fields at
  initialization to the {NTK} geometry of training.
\newblock \emph{arXiv preprint arXiv:2304.03385}, 2023.

\bibitem[Celli and Peccati(2025)]{celli2025entropic}
L.~Celli and G.~Peccati.
\newblock Entropic bounds for conditionally {G}aussian vectors and applications
  to neural networks.
\newblock \emph{arXiv preprint arXiv:2504.08335}, 2025.

\bibitem[Chatterjee and Meckes(2008)]{Chatterjee2008}
S.~Chatterjee and E.~Meckes.
\newblock Multivariate normal approximation using exchangeable pairs.
\newblock \emph{ALEA Lat. Am. J. Probab. Math. Stat.}, 4:\penalty0 257--283,
  2008.

\bibitem[Chen et~al.(2011)Chen, Goldstein, and Shao]{Chen2011normal}
L.~H. Chen, L.~Goldstein, and Q.-M. Shao.
\newblock \emph{Normal approximation by {S}tein's method}.
\newblock Springer, 2011.

\bibitem[de~G.~Matthews et~al.(2018)de~G.~Matthews, Hron, Rowland, Turner, and
  Ghahramani]{Matthews2018}
A.~G. de~G.~Matthews, J.~Hron, M.~Rowland, R.~E. Turner, and Z.~Ghahramani.
\newblock {G}aussian process behaviour in wide deep neural networks.
\newblock In \emph{International Conference on Learning Representations}, 2018.

\bibitem[Der and Lee(2005)]{der2005beyond}
R.~Der and D.~Lee.
\newblock {Beyond {G}aussian processes: On the distributions of infinite
  networks}.
\newblock In \emph{Advances in Neural Information Processing Systems},
  volume~18, 2005.

\bibitem[Eldan et~al.(2021)Eldan, Mikulincer, and Schramm]{Eldan2021}
R.~Eldan, D.~Mikulincer, and T.~Schramm.
\newblock Non-asymptotic approximations of neural networks by {G}aussian
  processes.
\newblock In \emph{Conference on Learning Theory}, pages 1754--1775. PMLR,
  2021.

\bibitem[Favaro et~al.(2023)Favaro, Fortini, and Peluchetti]{favaro2023deep}
S.~Favaro, S.~Fortini, and S.~Peluchetti.
\newblock Deep stable neural networks: {L}arge-width asymptotics and
  convergence rates.
\newblock \emph{Bernoulli}, 29\penalty0 (3):\penalty0 2574--2597, 2023.

\bibitem[Favaro et~al.(2025)Favaro, Hanin, Marinucci, Nourdin, and
  Peccati]{favaro2023quantitative}
S.~Favaro, B.~Hanin, D.~Marinucci, I.~Nourdin, and G.~Peccati.
\newblock Quantitative {CLT}s in deep neural networks.
\newblock \emph{Probab. Theory Related Fields}, 191\penalty0 (3-4):\penalty0
  933--977, 2025.

\bibitem[Glorot and Bengio(2010)]{glorot2010understanding}
X.~Glorot and Y.~Bengio.
\newblock Understanding the difficulty of training deep feedforward neural
  networks.
\newblock In \emph{Proceedings of the 13th International Conference on
  Artificial Intelligence and Statistics}, pages 249--256, 2010.

\bibitem[Goldstein and Rinott(1996)]{Goldstein1996}
L.~Goldstein and Y.~Rinott.
\newblock Multivariate normal approximations by {S}tein's method and size bias
  couplings.
\newblock \emph{J. Appl. Probab.}, 33\penalty0 (1):\penalty0 1--17, 1996.

\bibitem[Golikov and Yang(2022)]{golikov2022nongaussian}
E.~Golikov and G.~Yang.
\newblock Non-{G}aussian tensor programs.
\newblock In \emph{Advances in Neural Information Processing Systems},
  volume~35, 2022.

\bibitem[G{{\"o}}tze(1991)]{Gotze1991}
F.~G{{\"o}}tze.
\newblock On the rate of convergence in the multivariate {CLT}.
\newblock \emph{Ann. Probab.}, 19\penalty0 (2):\penalty0 724--739, 1991.

\bibitem[Hanin(2023)]{Hanin2023}
B.~Hanin.
\newblock Random neural networks in the infinite width limit as {G}aussian
  processes.
\newblock \emph{Ann. Appl. Probab.}, 33\penalty0 (6A):\penalty0 4798--4819,
  2023.

\bibitem[Jacot et~al.(2018)Jacot, Gabriel, and Hongler]{Jacot2018}
A.~Jacot, F.~Gabriel, and C.~Hongler.
\newblock Neural tangent kernel: Convergence and generalization in neural
  networks.
\newblock In \emph{Advances in Neural Information Processing Systems},
  volume~31, 2018.

\bibitem[Jung et~al.(2023)Jung, Lee, Lee, and Yang]{jung2023stable}
P.~Jung, H.~Lee, J.~Lee, and H.~Yang.
\newblock {$\alpha$}-stable convergence of heavy/light-tailed infinitely wide
  neural networks.
\newblock \emph{Advances in Applied Probability}, 55\penalty0 (4):\penalty0
  1415--1441, 2023.

\bibitem[Klukowski(2022)]{Klukowski2022}
A.~Klukowski.
\newblock Rate of convergence of polynomial networks to {G}aussian processes.
\newblock In \emph{Conference on Learning Theory}, pages 701--722. PMLR, 2022.

\bibitem[Lee et~al.(2023)Lee, Ayed, Jung, Lee, Yang, and Caron]{lee2023deep}
H.~Lee, F.~Ayed, P.~Jung, J.~Lee, H.~Yang, and F.~Caron.
\newblock Deep neural networks with dependent weights: Gaussian process mixture
  limit, heavy tails, sparsity and compressibility.
\newblock \emph{Journal of Machine Learning Research}, 24\penalty0
  (289):\penalty0 1--78, 2023.

\bibitem[{L}ee et~al.(2018){L}ee, {S}ohl {D}ickstein, Pennington, Novak,
  Schoenholz, and Bahri]{lee2018deep}
J.~{L}ee, J.~{S}ohl {D}ickstein, J.~Pennington, R.~Novak, S.~Schoenholz, and
  Y.~Bahri.
\newblock Deep neural networks as {G}aussian processes.
\newblock In \emph{International Conference on Learning Representations}, 2018.

\bibitem[Li et~al.(2017)Li, De, Xu, Studer, Samet, and
  Goldstein]{li2017training}
H.~Li, S.~De, Z.~Xu, C.~Studer, H.~Samet, and T.~Goldstein.
\newblock Training quantized nets: A deeper understanding.
\newblock \emph{Advances in Neural Information Processing Systems}, 30, 2017.

\bibitem[Lor{\'\i}a and Bhadra(2023)]{loria2023posterior}
J.~Lor{\'\i}a and A.~Bhadra.
\newblock {Posterior inference on shallow infinitely wide Bayesian neural
  networks under weights with unbounded variance}.
\newblock \emph{arXiv preprint arXiv:2305.10664}, 2023.

\bibitem[Meckes(2009)]{Meckes2009}
E.~Meckes.
\newblock On {S}tein's method for multivariate normal approximation.
\newblock In \emph{High dimensional probability {V}: the {L}uminy volume},
  volume~5 of \emph{Inst. Math. Stat. (IMS) Collect.}, pages 153--178. Inst.
  Math. Statist., Beachwood, OH, 2009.

\bibitem[Neal(1996)]{neal1996priors}
R.~M. Neal.
\newblock Priors for infinite networks.
\newblock In \emph{Bayesian learning for neural networks}, pages 29--53.
  Springer, 1996.

\bibitem[Nourdin and Peccati(2012)]{nourdin2012normal}
I.~Nourdin and G.~Peccati.
\newblock \emph{Normal approximations with {M}alliavin calculus: {F}rom
  {S}tein's method to universality}, volume 192.
\newblock Cambridge University Press, 2012.

\bibitem[Rai\v{c}(2018)]{Raic2018}
M.~Rai\v{c}.
\newblock A multivariate central limit theorem for {L}ipschitz and smooth test
  functions.
\newblock Preprint \url{https://arxiv.org/abs/1812.08268}, 2018.

\bibitem[Reinert and R{{\"o}}llin(2009)]{Reinert2009}
G.~Reinert and A.~R{{\"o}}llin.
\newblock Multivariate normal approximation with {S}tein's method of
  exchangeable pairs under a general linearity condition.
\newblock \emph{Ann. Probab.}, 37\penalty0 (6):\penalty0 2150--2173, 2009.

\bibitem[Soto(2024)]{soto2024wide}
T.~Soto.
\newblock {Wide stable neural networks: Sample regularity, functional
  convergence and Bayesian inverse problems}.
\newblock \emph{arXiv preprint arXiv:2407.03909}, 2024.

\bibitem[Trevisan(2023)]{trevisan2023wide}
D.~Trevisan.
\newblock Wide deep neural networks with {G}aussian weights are very close to
  {G}aussian processes.
\newblock \emph{arXiv preprint arXiv:2312.11737}, 2023.

\end{thebibliography}
\end{document}